\newcommand\gnote[1]{{#1}}
\newtheorem{theorem}{Theorem}
\newtheorem{corollary}{Corollary}
\newtheorem{definition}{Definition}
\newtheorem{lemma}[theorem]{Lemma}
\newtheorem{proposition}{Proposition}
\newcommand{\Lemmaref}[1]{Lemma~\ref{#1}}
\newcommand{\propref}[1]{Proposition~\ref{#1}}
\newcommand{\w}{{\mathbf w}}
\newcommand{\x}{{\mathbf x}}
\newcommand{\U}{{\mathbf U}}
\newcommand{\X}{{\mathbf X}}
\newcommand{\e}{{\mathbf e}}
\newcommand{\D}{{\mathbf D}}
\newcommand{\Rt}{{\mathbf R}(\theta)}
\newcommand{\I}{{\mathbf I}}
\newcommand{\V}{{\mathbf V}}
\renewcommand{\O}{{\mathbf O}}
\renewcommand{\P}{{\mathbf P}}
\renewcommand{\v}{{\mathbf v}}
\renewcommand{\u}{{\mathbf u}}
\newcommand{\norm}[1]{\left\Vert #1\right\Vert}
\newcommand{\cO}{\mathcal{O}}
\newcommand{\orth}[2]{\cO^{#1\times #2}}
\newcommand{\mat}[2]{\R^{#1\times #2}}
\newcommand{\R}{\mathbb{R}}
\begin{document}

\title{Fair Principal Component Analysis and Filter Design}

\author{\IEEEauthorblockN{Gad Zalcberg$^{(1)}$ and Ami Wiesel$^{(1)\ (2)}$}

\IEEEauthorblockA{\textit{(1) School of Computer Science and Engineering, The Hebrew University of Jerusalem, Israel\\ (2) Google Research, Israel}}
}

\maketitle

\begin{abstract}
We consider Fair Principal Component Analysis (FPCA) and search for a low dimensional subspace that spans multiple target vectors in a fair manner. FPCA is defined as a non-concave maximization of the worst projected target norm within a given set. The problem arises in filter design in signal processing, and when incorporating fairness into dimensionality reduction schemes. The state of the art approach to FPCA is via semidefinite programming followed by rank reduction methods. Instead, we propose to address FPCA using simple sub-gradient descent. We analyze the landscape of the underlying optimization in the case of orthogonal targets. We prove that the landscape is benign and that all local minima are globally optimal. Interestingly, the SDR approach leads to sub-optimal solutions in this orthogonal case. Finally, we discuss the equivalence between orthogonal FPCA and the design of normalized tight frames. 
\end{abstract}

\begin{IEEEkeywords}
Dimensionality Reduction, SDP, Fairness, Normalized Tight Frame, PCA
\end{IEEEkeywords}

\IEEEpeerreviewmaketitle

\section{Introduction}
\IEEEPARstart{D}{imensionality} reduction is a fundamental problem in signal processing and machine learning. In particular, Principal Component Analysis (PCA) is among the most popular data science tool. It involves a non-concave maximization but has a tight semidefinite relaxation (SDR). Its optimization landscape, saddle points and extreme points are all well understood and it is routinely solved using scalable first order methods \cite{sun2015nonconvex}. PCA maximizes the average performance across a given set of vector targets. In many settings, worst case metrics are preferred in order to ensure fairness and equal performance across all targets. This gives rise to Fair PCA (FPCA) \gnote{\cite{morgenstern2019fair} which will be defined formally in the next section}.  Unfortunately, changing the average PCA objective to a worst case FPCA objective results in an NP-hard problem \cite{morgenstern2019fair} which is poorly understood. There is a growing body of works on convex relaxations via SDR for FPCA \cite{morgenstern2019fair, samadi2018price}, but these methods do not scale well and are inapplicable to many realistic settings. Therefore, the goal of this paper to consider scalable first order solutions to FPCA and shed more light on the landscape of this important optimization problem. 

Due to the significance of PCA it is non-trivial to track the origins of FPCA. In the context of filter design, FPCA with rank one constraints is known as multicast beamforming and there is a huge body of literature on this topic, e.g.,  \cite{sidiropoulos2006transmit,ma2010semidefinite,cheriyadat2003principal}. In the modern context of fairness in machine learning, FPCA was considered in \cite{olfat2019convex,bian2010max,morgenstern2019fair,samadi2018price}. It was shown that SDR with an iterative rounding technique provides near optimal performance when the rank is much larger than the squared root of the number of targets. More generally, by interpreting the worst case operator as an $L_\infty$ norm, FPCA is a special case of $L_{2,p}$ norm optimizations. Classical PCA corresponds to $L_{2,2}$. Robust PCA algorithms as \cite{lerman2015robust} rely on $L_{2,1}$, and FPCA is the other extreme using $L_{2,\infty}$. Most of these works capitalize on the use of SDR that leads to conic optimizations with provable performance guarantees. 
\gnote{Finally, \cite{kamani2019efficient} proposed a different definition to fairness in PCA via multi-objective optimization. They developed a first order method for attaining random solutions on the PCA Pareto frontier. FPCA defined above may be considered as one of these points.}


SDR and nuclear norm relaxations are currently state of the art in a wide range of subspace recovery problems. Unfortunately, SDR is known to scale poorly to high dimensions. Therefore, there is a growing body of works on first order solutions to semidefinite programs. The main trick is to factorize the low rank matrix and show that the landscape of the resulting non-convex objective is benign \cite{burer2003nonlinear,burer2005local,boumal2018deterministic,boumal2016non,cifuentes2019burer,zhu2018global,li2018non}. The SDR of FPCA involves two types of linear matrix inequalities and still poses a challenge. Therefore, we first reformulate the problem and then apply sub-gradient descent on the factorized formulation.

The main contribution of this paper is the observation that the landscape of the factorized FPCA optimization is benign when the targets are orthogonal. This is the case in which dimensionality reduction is most lossy. Yet, we show that it is easy from an optimization perspective. The maximization is non-concave but every (non-connected) local minima is globally optimal. Surprisingly, we show that this case is challenging for SDR. Its objective is tight but it is not trivial to project its solution onto the feasible set. Numerical experiments with synthetic data suggest that these properties also hold in more realistic near-orthogonal settings. Finally, a direct corollary of our analysis is an  equivalence between orthogonal FPCA and the design of finite normalized tight frames \cite{benedetto2003finite}. This characterization may be useful in future works on data-driven normalized tight frame design.
\subsection*{Notations:}
We used bold uppercase letters (e.g. {\bf{P}}) for matrices, bold lowercase letters (e.g. $\v$) for vectors and non-bold letters (e.g. $n$) for scalars. We used pythonic notation for indices of matrices:
$\U_{i:}$ for the $i$'th row, $\U_{:j}$ for the $j$'th column and $\U_{ij}$ for the $i,j$'th entry of matrix. The set of $d\times r$ ($r\le d$) semi-orthogonal matrices (matrices with orthonormal columns) is denoted by $\orth{d}{r}$, the set of positive semidefinite matrices by $\mathbb{S}^d_+$, and the set of $d\times d$ projection matrices of rank at most $r$ by $\mathcal{P}^d_r$ (and $\mathcal{P}^d:=\mathcal{P}^d_d$).
Given a function $f:A\rightarrow\mathbb{R}^m$, $\U\in A$ we define the set of indices $\mathcal{I}_\U:=\arg\max_if_i(\U)$.
Finally we define a projection operator onto the set of projection matrices of rank at most $r$: $\Pi_r:\mathbb{S}^d_+\rightarrow \mathcal{P}^d_r$ as follows: Let $\P=\U{\bf \Sigma}\U^T$ (EVD decomposition), where: $\U=(\u_1,...,\u_d)$ then: $\Pi_r[\P]:=(\u_1,...,\u_r)(\u_1,...,\u_r)^T$.

\section{Problem formulation}
The goal of this paper is to identify a low dimensional subspace that maximizes the smallest norm of a given set of projected targets. More specifically, let $\{\x_i\}_{i=1}^n\subset\mathbb{R}^d$ be the set of targets, we consider the problem: \begin{equation} \label{main_problem}
   {\rm{FPCA:}}\quad \begin{array}{ll}
        \max_{{\bf{P}}\in\mathcal{P}^d} & \min_{i\in[n]}  \x_{i}^{T}{\bf{P}}\x_{i}\\
        {\rm{s.t.}} &  {\rm{rank}}\left({\bf{P}}\right)\le r
    \end{array}
\end{equation}

 Our motivation to FPCA arises in the context of filter design for detection. We are interested in the design of a linear sampling device from ${\mathbb{R}}^d$ to ${\mathbb{R}}^r$ that will allow detection of $n$ known targets denoted by $\{\x_i\}_{i=1}^n$. \gnote{The motivation for using a small rank is that the cost of power, space and/or time resources typically scales with $r$. Detection accuracy in additive white Gaussian noise decreases exponentially with the received signal to noise ratio (SNR), and it is therefore natural to maximize the worst SNR across all the targets. Hopefully, this will lead to a fair solution with equal norms for all the targets.} 
 
  FPCA with $r=1$ is concerned with the design of a single beaamforming filter, and is equivalent to multicast downlink transmit beamforming \cite{sidiropoulos2006transmit,ma2010semidefinite}
\begin{equation}
    \begin{array}{ll}
        \max_{\u\in\mathbb{R}^d} & \min_{i\in[n]} (\x_{i}^{T}\u)^2\\
        {\rm{s.t.}} &  \norm{\u}\le 1
    \end{array}
\end{equation}
Practical systems typically satisfy $r<n\ll d$, e.g., the design of a few expensive sensors that downsample a high resolution digital signal (or even an infinite dimension analog signal). Without loss of optimality, we assume a first stage of dimensionality reduction via PCA that results in effective dimensions such that $n=d$.


As detailed in the introduction, FPCA was also recently introduced  in the context of fair machine learning. There, it is more natural to assume a block structure. The targets are divided into $n$ blocks, denoted by $d\times b_i$ matrices $\X_i$, and fairness needs to be respected with respect to properties as gender or race \cite{olfat2019convex,morgenstern2019fair,samadi2018price}:
\begin{equation} \label{matrix_case}
   {\rm{FPCA}}^{\rm{blocks}}\quad \begin{array}{ll}
        \max_{{\bf{P}}\in\mathcal{P}^d} & \min_{i\in[n]}  {\rm{Tr}}\left(\X_{i}^{T}{\bf{P}}\X_{i}\right)\\
        {\rm{s.t.}} &  {\rm{rank}}\left({\bf{P}}\right)\le r
    \end{array}
\end{equation}
Throughout this paper, we will consider the simpler non-block FPCA formulation corresponding to filter design. Preliminary experiments suggest that most of the results also hold in the block case. 



FPCA is known to be NP-hard \cite{sidiropoulos2006transmit, morgenstern2019fair}. 
The state of the art approach to FPCA is SDR. Specifically, we relax the rank constraint by its convex hull, the nuclear norm, and the projection constraint by linear matrix inequalities \cite{ma2010semidefinite, morgenstern2019fair}. This yields the SDP:
\begin{equation} \label{SDR}
   {\rm{SDR:}}\quad \begin{array}{ll}
        \max_{{\bf{P}}\in\mathbb{S}^d_+} & \min_{i\in[n]} \x_{i}^{T}{\bf{P}}\x_{i}\\
        {\rm{s.t.}} &   {\rm{Trace}}\left({\bf{P}}\right)\le r\\
             &   {\bf{0}}\preceq {\bf{P}}\preceq {\bf{I}}
    \end{array}
\end{equation}
\gnote{The computational complexity of solving an SDR using an Interior Point method is intractable for most applications, but \cite{morgenstern2019fair} propose a practical and efficient multiplicative weight update.} Unfortunately, the optimal solution to SDR might not be a feasible projection, and $\P_{\rm{SDR}}$ may have any rank. Due to the relaxation, SDR always results in an upper bound on FPCA. To obtain a feasible approximation, it is customary to define
\begin{equation}
    \P_{\rm{PrSDR}}=\Pi_r[\P_{\rm{SDR}}]
\end{equation}
PrSDR is a feasible projection matrix of rank $r$, and is therefore a lower bound on FPCA. Better approximations may be obtained via randomized procedures \cite{ma2010semidefinite}. Recently, an iterative rounding technique was proven to provide 
a $\left(1-\frac{O(\sqrt{n})}{r}\right)$ approximation \cite{morgenstern2019fair}. This result is near optimal in the block case where it is reasonable to assume $r\gg \sqrt{n}$. It is less applicable to filter design where $n$ is large and smaller ranks are required.  

The goal of this paper is to provide a scalable, yet accurate solution to FPCA, \gnote{without the need for additional rank reduction schemes.} Motivated by the growing success of simple gradient based methods in complex optimization problems, e.g., deep learning, we consider the application of sub-gradient descent to FPCA and analyze its optimization landscape.

\section{Algorithm}

In this section, we propose an alternative and more scalable approach for solving SDR. The two optimization challenges in FPCA are the projection and rank constraints. We confront the first challenge by reformulating the problem using a quadratic objective, and the second by decomposing the projection matrix using its low rank factors. Together, we define factorized FPCA:
\begin{equation} \label{factorized_restore_problem}
{\rm{F-FPCA:}}\quad\begin{array}{ll}
     \min_{\U\in\mathbb{R}^{d\times r}}  & \max_{{i\in[n]}}\quad f_i(\U)
\end{array}
\end{equation}
where
\begin{equation} \label{fi}
f_i(\U)=\norm{\x_{i} - \U\U^T\x_{i}}^2 - \norm{\x_{i}}^2
\end{equation}
The formal equivalence between \eqref{main_problem} and \eqref{factorized_restore_problem} is stated below.

\begin{proposition} \label{FPCA equivalent to F-FPCA}
Let $\U$ be a globally optimal solution to F-FPCA in \eqref{factorized_restore_problem}. Then, 
$\P=\Pi[{\U}{\U}^T]$ is a globally optimal solution to FPCA in \eqref{main_problem}.
\end{proposition}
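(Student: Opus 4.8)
The plan is to connect the two optimization problems through the relationship between a projection matrix $\P$ of rank at most $r$ and its factorization $\U\U^T$ with $\U\in\mathbb{R}^{d\times r}$. The key identity to establish first is that for any such $\U$, if we let $\P_\U=\Pi[\U\U^T]$ (the projection onto the column space of $\U$), then $\P_\U$ is a rank-$\le r$ projection, and the map $\U\mapsto\P_\U$ is surjective onto $\mathcal{P}^d_r$. Conversely, every $\P\in\mathcal{P}^d_r$ can be written as $\V\V^T$ for some $\V\in\orth{d}{r'}$ with $r'\le r$, which embeds into $\mathbb{R}^{d\times r}$ by padding with zero columns. So the feasible sets are in correspondence up to this projection step.

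Next I would rewrite the objectives so they match. For the FPCA objective, $\x_i^T\P\x_i = \norm{\x_i}^2 - \norm{\x_i-\P\x_i}^2$ when $\P$ is a projection, since $\norm{\x_i}^2 = \norm{\P\x_i}^2 + \norm{(\I-\P)\x_i}^2$ and $\norm{\P\x_i}^2 = \x_i^T\P\x_i$. Hence
\begin{equation}
\min_{i\in[n]}\x_i^T\P\x_i = \min_{i\in[n]}\left(\norm{\x_i}^2 - \norm{\x_i-\P\x_i}^2\right) = -\max_{i\in[n]}\left(\norm{\x_i-\P\x_i}^2 - \norm{\x_i}^2\right).
\end{equation}
So maximizing the FPCA objective over $\P\in\mathcal{P}^d_r$ is the same as minimizing $\max_i\bigl(\norm{\x_i-\P\x_i}^2-\norm{\x_i}^2\bigr)$ over the same set. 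The remaining gap is that $f_i(\U)$ in \eqref{fi} uses $\U\U^T\x_i$ rather than $\P_\U\x_i$. The crucial observation here is that $\U\U^T\x_i = \P_\U\x_i$ exactly when $\U$ has orthonormal columns; for general $\U$ they differ. I would handle this by arguing that at a global optimum of F-FPCA we may assume $\U$ is semi-orthogonal: given any $\U$, replacing it by the $\V$ from a thin SVD / polar-type factorization whose columns span the same space as $\U\U^T$'s range only decreases (weakly) each $f_i$, because $\norm{\x_i-\V\V^T\x_i}^2$ is the squared distance to a subspace and the subspace $\mathrm{col}(\V)\supseteq\mathrm{col}(\U\U^T)$ is at least as good as the "effective" action of $\U\U^T$. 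Making this monotonicity precise — that no non-orthonormal $\U$ can beat the best orthonormal one — is the step I expect to be the main obstacle, since one must be careful that $\U\U^T$ acting on $\x_i$ need not even be a contraction toward the column space.

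Once that reduction is in place, the argument closes quickly: for semi-orthogonal $\U$ we have $\U\U^T = \P_\U = \Pi[\U\U^T]$, so $f_i(\U) = \norm{\x_i-\P_\U\x_i}^2 - \norm{\x_i}^2$, and therefore $\max_i f_i(\U) = -\min_i \x_i^T\P_\U\x_i$. A global minimizer $\U$ of F-FPCA thus yields $\P=\Pi[\U\U^T]=\P_\U$ with $\min_i\x_i^T\P\x_i = -\max_i f_i(\U)$ being the largest possible value of the FPCA objective, by the surjectivity of $\U\mapsto\P_\U$ onto $\mathcal{P}^d_r$ established at the start. Hence $\P$ is globally optimal for FPCA. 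I would also note the edge case where the optimal rank is strictly less than $r$, which is absorbed automatically since $\mathcal{P}^d_{r'}\subseteq\mathcal{P}^d_r$ and F-FPCA is free to use fewer effective columns.
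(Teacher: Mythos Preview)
Your overall structure matches the paper's: reduce to semi-orthogonal $\U$, observe that for such $\U$ one has $f_i(\U)=-\x_i^T\U\U^T\x_i$, and then use surjectivity of $\U\mapsto\U\U^T$ from $\orth{d}{r}$ onto the rank-$r$ projections (with the lower-rank case absorbed by $\Pi$). So the approach is the same as the paper's.

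The only substantive point is the step you flag as the ``main obstacle,'' and here you are making it harder than it is. The paper dispatches it by writing the truncated EVD $\U\U^T=\O\D\O^T$ and computing
\[
f_i(\U)=\sum_{l=1}^{r}(\D_{ll}^{2}-2\D_{ll})\,\langle\O_{:l},\x_i\rangle^{2}.
\]
Since $t^2-2t\ge -1$ with equality only at $t=1$, this gives $f_i(\U)\ge -\x_i^T\Pi[\U\U^T]\x_i$ for every $\U$, with equality exactly when all nonzero eigenvalues of $\U\U^T$ are $1$, i.e.\ when $\U$ may be taken semi-orthogonal. That is precisely the monotonicity you wanted.

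Your own geometric sketch can be made just as short, and your worry is unfounded: $\U\U^T\x_i\in\mathrm{col}(\U)$ regardless of whether $\U\U^T$ is a contraction, and $\Pi[\U\U^T]\x_i$ is by definition the closest point of $\mathrm{col}(\U)$ to $\x_i$, so $\norm{\x_i-\U\U^T\x_i}\ge\norm{\x_i-\Pi[\U\U^T]\x_i}$ immediately. (Your ``$\supseteq$'' is actually ``$=$'', but that is harmless.) Either route closes the gap in one line; the rest of your argument is fine.
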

Before proving the proposition, we note that the projection $\Pi[\U\U^T]$ is only needed in order to handle a degenerate case in which the dimension of the subspace spanned by the targets is smaller than $r$. Typically, this projection is not needed and $\U\U^T$ is feasible. 

\begin{proof}
We rely on the observation that F-FPCA has an optimal solution with orthogonal matrix, and for orthogonal matrix we have:
$$-f_i(\U)=\x_i^T\U\U^T\x_{i}$$
In addition the function $\U\mapsto\U\U^T$ is a surjective function from $\orth{d}{r}$ to $\mathcal{P}_r\setminus\mathcal{P}_{r-1}$, so the optimization over both sets is equivalent. More details are available in the Appendix.
\end{proof}

The advantage of solving F-FPCA rather than FPCA is that it \gnote{forces a low rank solution via} an unconstrained optimization. A member of the sub-gradient of F-FPCA objective can be computed in $O(d r n)$. In particular, Algorithm 1 describes a promising sub-gradient descent method for its minimization.

The obvious downside of using F-FPCA is its non-convexity that may cause descent algorithms to converge to bad stationary points. \gnote{Its convergence analysis is more difficult due to the non-smooth maximum function. Nonetheless, in the next section, we prove that there are no bad local minima when the targets are orthogonal. This is also demonstrated in the experimental  section where we show the advantages of F-FPCA in terms of accuracy.}

{\bf{Relation to other low rank optimization papers:}}
We note in passing that there is a large body of literature on global optimality properties of low rank optimizations \cite{zhu2018global,li2018non}. These provide sufficient conditions for convergence to global optimum in factorized formulations, e.g., Restricted Strong Convexity and Smoothness. Observe that these guarantees require the existence of a low rank optimal solution in the original problem. These conditions do not hold in FPCA, and therefore our analysis below takes a different approach.  

\begin{algorithm}[H]
 \caption{F-FPCA via sub-gradient descent}
 \begin{algorithmic}[1]
 \renewcommand{\algorithmicrequire}{\textbf{Input: }}
 \renewcommand{\algorithmicensure}{\textbf{Output:}}
 \REQUIRE $\{\x_i\}_{i=1}^n\subset\mathbb{R}^d$, $r\in\mathbb{N}$, $\eta$.
 \ENSURE $\P\in\mathcal{P}^d, {\rm{rank}}\left({\bf{P}}\right)\le r$.
  \STATE $t\leftarrow 0$ 
  \STATE draw $\U\in\mat{d}{r}$ randomly.
  \REPEAT 
  \STATE $t\leftarrow t + 1$
  \STATE $\hat{i}\leftarrow\arg\max_{{i\in[n]}}\norm{\x_{{i}} - \U\U^T\x_{{i}}}^2 - \norm{\x_{i}}^2$
  \STATE $\U\leftarrow\U-\frac{\eta}{t}\left(\x_{\hat{i}}\x_{\hat{i}}^T\U\U^{T}+\U\U^{T}\x_{\hat{i}}\x_{\hat{i}}^T-2\x_{\hat{i}}\x_{\hat{i}}^T\right)\U$
  \UNTIL convergence
 \RETURN $\P=\Pi\left[\U\U^T\right]$
 \end{algorithmic}
 \end{algorithm}

\section{Analysis - the orthogonal case}
In this section, we analyze the FPCA in the special case of orthogonal targets. As explained, FPCA is NP-hard and we do not expect a scalable and accurate solution for arbitray targets. Interestingly, our analysis shows that the problem becomes significantly easier when the targets are orthogonal. This is the case for example when the targets are randomly generated and the number of targets is much smaller than their dimension.

We will use the following assumptions:
\begin{enumerate}[label={\bf{A\arabic*}}:]
    \item The targets $\{\x_i\}_{i=1}^n\subset \mathbb{R}^d$ are orthogonal vectors.
    \item The problem is not degenerate in the sense that $$\frac{r}{n}H< \min_i\norm{\x_i}^2$$ where $$H=\frac{n}{\sum_{i=1}^n \frac{1}{\left\Vert\x_{i}\right\Vert^{2}}}$$ 
    (the harmonic mean of the squared norms of $\{\x_i\}^n_{i=1}$).
\end{enumerate}
Assumption {\bf{A1}} is the main property that simplifies the landscape and allows a tractable solution and analysis. On the other hand, assumption {\bf{A2}} is a technical condition that prevents a trivial degenerate solution based on the norms of the targets.   


Using these assumptions, we have the following results.

\begin{proposition} \label{NLM}
    Under assumptions {\bf{A1}}-{\bf{A2}}, 
    any local minimizer of F-FPCA is a global maximizer of FPCA and FPCA$=\frac{r}{n}H$.
\end{proposition}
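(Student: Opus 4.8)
The plan is to separate the two assertions and settle the value of FPCA first. Since the targets are orthogonal we write $\x_i=\sqrt{a_i}\,\e_i$ with $\{\e_i\}_{i=1}^n$ orthonormal and $a_i=\norm{\x_i}^2$, so that $\x_i^T\P\x_i=a_i\P_{ii}$ in this basis. Every feasible $\P$ satisfies $\sum_{i=1}^n\P_{ii}\le\operatorname{Tr}(\P)\le r$, hence if $v:=\min_i a_i\P_{ii}$ then $r\ge\sum_{i=1}^n\P_{ii}\ge v\sum_{i=1}^n a_i^{-1}=vn/H$, giving FPCA$\,\le\frac{r}{n}H$. For the matching bound put $q_i:=\frac{(r/n)H}{a_i}$; assumption {\bf{A2}} forces $q_i\in(0,1)$ while $\sum_i q_i=r$, so $q$ is majorized by the spectrum $(1^r,0^{d-r})$ and the Schur--Horn theorem (equivalently, existence of a tight frame with prescribed vector norms) produces a rank-$r$ projection $\bar\P$ with $\bar\P_{ii}=q_i$ for $i\le n$, which attains $\x_i^T\bar\P\x_i=\frac{r}{n}H$ for all $i$. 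Thus FPCA$\,=\frac{r}{n}H$, and — using $-f_i(\U)=\x_i^T\U\U^T\x_i$ on $\orth{d}{r}$ together with Proposition~\ref{FPCA equivalent to F-FPCA} — the optimal value of F-FPCA equals $-\frac{r}{n}H$.

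It remains to show that every local minimizer $\U^\star$ of F-FPCA attains this value. I would use the identity $f_i(\U)=\x_i^T\big((\U\U^T)^2-2\U\U^T\big)\x_i$, so that $f_i$ depends on $\U$ only through $M:=\U\U^T$, and the fact that $\U^\star$ is a local minimizer of F-FPCA iff $M^\star:=\U^\star(\U^\star)^T$ is a local minimizer of $g(M):=\max_i\x_i^T(M^2-2M)\x_i$ over $\{M\succeq0:\operatorname{rank}(M)\le r\}$ (which holds since $\U\mapsto\U\U^T$ is open onto this variety). Writing the EVD $M^\star=\sum_k\mu_k\w_k\w_k^T$ gives $f_i(M^\star)=\sum_k\psi(\mu_k)(\w_k^T\x_i)^2$ with $\psi(\mu)=\mu^2-2\mu\ge-1$ and $\psi(\mu)=-1\iff\mu=1$. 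First I reduce to $M^\star$ being a projection: the feasible homotopy $M(t)=(1-t)M^\star+tQ$, with $Q$ the orthogonal projection onto $\operatorname{col}(M^\star)$, moves every eigenvalue monotonically to $1$, and since $\psi$ is nonincreasing toward $1$ it makes each $f_i$ and hence $g$ nonincreasing; at a local minimizer this move is flat, forcing every $\w_k$ with $\mu_k\ne1$ to be orthogonal to every target that stays active along the homotopy. Such directions waste a unit of rank; rotating one such $\w_k$ toward the part of an active target not captured by the eigenvalue-$1$ eigenspace — such a part exists and lies in $\ker M^\star$, for otherwise every active target would be fully captured and $g(M^\star)=-\min_i a_i<-\frac{r}{n}H$, contradicting optimality of the value — yields a strict descent unless no eigenvalue differs from $1$.

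With $M^\star$ a projection, write $m_i=M^\star_{ii}\in[0,1]$ and $v^\star:=\min_i a_i m_i$, so $g(M^\star)=-v^\star$ and $\sum_{i=1}^n m_i\le r$. Suppose $v^\star<\frac{r}{n}H$. Assumption {\bf{A2}} then gives $m_i=v^\star/a_i<1$ for every active index $i$ (the active coordinates are unsaturated), and $\sum_{i\in\mathcal I}m_i=v^\star\sum_{i\in\mathcal I}a_i^{-1}\le v^\star\sum_{i=1}^na_i^{-1}=v^\star\,n/H<r$, so strictly positive diagonal mass sits outside the active set $\mathcal I$. An infinitesimal rotation $M(t)=e^{tA}M^\star e^{-tA}$ with $A$ skew-symmetric keeps $M(t)$ a rank-$r$ projection and perturbs the diagonal by $\operatorname{diag}[A,M^\star]$, which ranges over $\{v:\sum_i v_i=0\}$ at an unsaturated point; choosing $A$ to transfer the spare mass onto the active coordinates strictly increases $\min_i a_iM(t)_{ii}$ for small $t>0$, i.e.\ $g(M(t))<g(M^\star)$, contradicting local minimality. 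Hence $v^\star=\frac{r}{n}H$, so $M^\star=\Pi[\U^\star(\U^\star)^T]$ is feasible for FPCA and attains the optimum $\frac{r}{n}H$, i.e.\ it is a global maximizer.

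I expect the main obstacle to be the two descent constructions above: making rigorous that a local minimizer may be taken to be a projection (controlling the eigenvalue-$\ne1$ directions and the rank-deficient case) and building the mass-transferring rotation, while coping with (i) the non-smoothness of $\max$ — the active set can change along $M(t)$, so one must increase $a_iM(t)_{ii}$ for \emph{all} active $i$ simultaneously; (ii) non-active coordinates saturated at $0$ or $1$, where $\operatorname{diag}[A,M^\star]$ vanishes to first order and one must use the second-order term $\tfrac12[A,[A,M^\star]]_{ii}$; and (iii) the degenerate sub-case $v^\star=0$. The key enabling fact is the infinitesimal Schur--Horn statement that $A\mapsto\operatorname{diag}[A,M^\star]$ maps onto the zero-sum hyperplane at points with no saturated coordinate.
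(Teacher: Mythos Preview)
Your computation of the value ${\rm FPCA}=\frac{r}{n}H$ via Schur--Horn is correct and is a pleasant alternative to the paper's route, which obtains the value only \emph{a posteriori} after showing that every local minimizer has all $f_i$ equal.

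The second half of your argument, however, has a genuine gap. Your key tool is the claim that for a rank-$r$ projection $M^\star$ with all diagonal entries in $(0,1)$, the map $A\mapsto\operatorname{diag}[A,M^\star]$ is onto the zero-sum hyperplane. This is false. A direct calculation gives $\operatorname{diag}[A,M^\star]_i=2\sum_{j}A_{ij}M^\star_{ij}$, so the image is $\operatorname{span}\{M^\star_{ij}(\e_i-\e_j):i<j\}$, which is the full zero-sum space only when the graph on $[d]$ with edges $\{(i,j):M^\star_{ij}\neq 0\}$ is connected. Take $d=n=4$, $r=2$, $a_1=a_2<a_3=a_4$, and
\[
M^\star=\tfrac12\begin{pmatrix}1&1&0&0\\1&1&0&0\\0&0&1&1\\0&0&1&1\end{pmatrix}.
\]
All diagonals equal $\tfrac12$ (no saturation), the active set is $\{1,2\}$, yet the image of your map is $\operatorname{span}\{\e_1-\e_2,\e_3-\e_4\}$: you cannot increase $m_1$ and $m_2$ simultaneously at first order. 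So the obstruction is not the $0/1$ saturation you flag in~(ii), but the vanishing of the off-diagonal block $M^\star_{ij}$ between active and inactive indices. The same issue bites your projection reduction: rotating an eigenvector $\w_k$ toward the kernel component of $\x_{i^*}$ can move $f_{i'}$ for other active $i'$, since $\x_{i'}$ need not be orthogonal to that kernel component.

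The paper confronts exactly this dichotomy, but by working with $\U$ rather than $M$ and using explicit Givens rotations in the plane $\{\v_1,\v_2\}$ with $\v_1=\x_i/\|\x_i\|$ and $\v_2$ either $\x_j/\|\x_j\|$ (when some active $i$ has $\x_i^T\U\U^T\x_j\neq0$, giving first-order descent) or $\U\U^T\x_j/\|\U\U^T\x_j\|$ (otherwise, giving a \emph{second-order} $\sin^2\theta$ descent). The point you are missing is that orthogonality of the targets guarantees $\x_k\perp\v_1,\v_2$ for every other active $k$, so the rotation leaves all other active $f_k$ exactly unchanged; this is what lets the paper peel off one active index at a time and iterate. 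Your abstract Schur--Horn framing does not exploit $\mathbf{A1}$ at this step, which is why it stalls on the block-diagonal example above.
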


\begin{proof}  
The proof consists of the following lemmas (proofs in the appendix): 

\begin{lemma} \label{local optimal is orthogonal}
Under assumptions \textbf{A1-A2}, let $\U\in\mat{r}{d}$ be a local minimizer of F-FPCA, then $\U\in\orth{d}{r}$.
\end{lemma}

\begin{lemma} \label{AE}
Under assumptions \textbf{A1-A2}, let $\U\in\orth{d}{r}$ a local minimizer of F-FPCA, then:
$f=f_i(\U)=f_j(\U)$ for all $i,j\in[n]$.
\end{lemma}

Intuitively, if the property in Lemma \ref{AE} is violated, then $\U$ can be infinitesimally changed in a manner that decreases the correlation of $\U$ with some direction $\w$ such that $\w\perp\x_j$ for all $j\in\mathcal{I}_\U$. We can decrease the value of $f_i$ for some $i\in\mathcal{I}_\U$ without harming the objective function using a sequence of Givens rotations with respect to the pairs $\{\w,\x_i\}$ for each $i\in\mathcal{I}_\U$. After decreasing $f_i$ for all $i\in\mathcal{I}_\U$ the objective will also be decreased.


Finally, in order to prove global optimality we define:
\begin{align} \label{target axes}
    \X=\left(\frac{\x_1}{\norm{\x_1}},...,\frac{\x_n}{\norm{\x_n}}\right),\quad\hat{\U}=\X^T\U
\end{align}
If $f=f_i(\U)=f_j(\U)$:
$$
    \left\Vert \hat{\U}_{i:}\right\Vert^2
    =-\frac{f_i(\U)}{\left\Vert\x_{i}\right\Vert^{2}}
    =-\frac{f}{\left\Vert\x_{i}\right\Vert^{2}}
$$
We have:
\begin{align*}
         r
    =\left\Vert \hat{\U}\right\Vert^2_F
    =\sum_{i=1}^n \left\Vert \hat{\U}_{i:}\right\Vert^2
    =-\sum_{i=1}^n \frac{f}{\left\Vert\x_{i}\right\Vert^{2}} 
\end{align*}
Rearranging yields $f= -\frac{r}{n}H$. 
Together with the equivalence in Proposition \ref{FPCA equivalent to F-FPCA} we conclude that all local minima yield an identical objective of $\frac{r}{n}H$ which is globally optimal.
\end{proof}

Proposition \ref{NLM} justifies the use of Algorithm 1 when the targets are orthogonal. Numerical results in the next section suggest that bad local minima are rare even in more realistic near-orthogonal scenarios. 

Given the favourable properties of F-FPCA in the orthogonal case, it is interesting to analyze the performance of SDR in this case. 
\begin{proposition} \label{MT}
    Under assumptions {\bf{A1}}-{\bf{A2}}, SDR is tight and its optimal objective value is $${\rm{SDR}}=\frac{r}{n}H.$$ However, the optimal solution may be full rank and infeasible for FPCA.
\end{proposition}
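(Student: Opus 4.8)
The plan is to establish the two assertions separately. Because the feasible set of FPCA (projection matrices of rank at most $r$) is contained in that of SDR while the two problems share the same objective, SDR is an upper bound; together with Proposition~\ref{NLM} this already gives $\mathrm{SDR}\ge\mathrm{FPCA}=\frac{r}{n}H$. Hence the real content is the reverse inequality $\mathrm{SDR}\le\frac{r}{n}H$, plus the exhibition of a full-rank maximizer.

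First I would use \textbf{A1} to reduce SDR to a scalar program. Working in the orthonormal basis $\{\x_i/\norm{\x_i}\}_{i=1}^n$, which is a complete basis of $\mathbb{R}^d$ since $n=d$, we have $\x_i^T\P\x_i=\norm{\x_i}^2\P_{ii}$, so the objective depends on $\P$ only through $\mathrm{diag}(\P)$. The constraint $\mathbf{0}\preceq\P\preceq\I$ forces $\P_{ii}\in[0,1]$ for every $i$, and $\mathrm{Trace}(\P)\le r$ forces $\sum_i\P_{ii}\le r$; conversely, any $p\in[0,1]^n$ with $\sum_i p_i\le r$ gives the feasible SDR point $\mathrm{diag}(p)$. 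Therefore
\[
\mathrm{SDR}=\max\Big\{\min_{i\in[n]}\norm{\x_i}^2 p_i \;:\; p\in[0,1]^n,\ \textstyle\sum_{i}p_i\le r\Big\}.
\]

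Next I would evaluate this scalar program by a water-filling argument. If $\min_i\norm{\x_i}^2 p_i\ge t$, then $p_i\ge t/\norm{\x_i}^2$ for all $i$, and summing together with $\sum_i p_i\le r$ yields $r\ge t\sum_i\norm{\x_i}^{-2}=tn/H$, i.e. $t\le\frac{r}{n}H$. The allocation $p_i^\star=\frac{r}{n}H/\norm{\x_i}^2$ attains this bound: $\sum_i p_i^\star=r$ by the definition of $H$, and $p_i^\star\le\frac{rH}{n\min_j\norm{\x_j}^2}<1$ by \textbf{A2}, so $p^\star$ is feasible. This gives $\mathrm{SDR}=\frac{r}{n}H$, i.e. tightness.

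Finally, for the second assertion I would take $\P^\star=\mathrm{diag}(p^\star)$ (in the target basis) as an explicit SDR optimum; its eigenvalues are the $p_i^\star$, all strictly positive and, by \textbf{A2}, strictly below one, so $\P^\star$ has full rank $d=n$. In the nontrivial regime $r<n$ this exceeds $r$, so $\P^\star\notin\mathcal{P}^d_r$ is infeasible for FPCA — in fact it is exactly the kind of "central" solution an interior-point method tends to output, which is precisely why the rounding step $\Pi_r[\cdot]$ is needed in practice. I do not anticipate a genuine obstacle; the only points requiring care are verifying both directions of the reduction to the diagonal program under the orthogonal change of basis, and invoking \textbf{A2} to certify that the water-filling allocation stays strictly inside the box $[0,1]^n$ — which is what simultaneously pins down the optimal value and forces $\P^\star$ to be full rank.
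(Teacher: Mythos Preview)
Your proposal is correct and follows essentially the same route as the paper: an orthogonal change of basis reduces SDR to a scalar allocation problem on the diagonal entries, the water-filling allocation $p_i^\star=\frac{r}{n}H/\norm{\x_i}^2$ is shown to be feasible (via \textbf{A2}) and optimal (via the trace constraint), and Proposition~\ref{NLM} closes the tightness claim while $\mathrm{diag}(p^\star)$ furnishes the full-rank optimum. The paper presents the same reduction as a chain of equivalent optimization problems and is terser about the role of \textbf{A2} in feasibility, but the substance is identical.
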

\begin{proof}
See Appendix.
\end{proof}

\gnote{Observe that the rank constraint is hard, and a rank reduction procedure such as PrSDR is necessary for finding a feasible solution. 
The iterative rounding algorithm of \cite{morgenstern2019fair} relies on finding an extreme point solution, and guarantees an upper bound on its rank. The bound is not always tight. For example, their algorithm fails to find an optimal low rank solution in the orthogonal case.}
On the other hand, Algorithm 1 easily finds the global solution.

Finally, we conclude this section by noting an interesting relation between FPCA with orthogonal targets and the design of Finite Tight Frames \cite{benedetto2003finite}. Recall the following definition:
\begin{definition}.
\begin{itemize}
    \item Let $\{\u_i\}_{i=1}^n\subset\mathbb{R}^r$. If $span(\{\u_i\}_{i=1}^n)=\mathbb{R}^r$ then $\{\u_i\}_{i=1}^n$ is frame for $\mathbb{R}^r$.
    \item A frame $\{\u_i\}_{i=1}^n$ is tight with frame bound A if $\forall \v\in\mathbb{R}^n$:
    $$\v=\frac{1}{A}\sum_{i=1}^n\left<\v,\u_i\right>\u_i$$
    \item A frame $\{\u_i\}_{i=1}^n$ is a 'Normalized Tight Frame' if $\{\u_i\}_{i=1}^n$ is tight frame and $\Vert \u_i\Vert=1$ for all $i$.
\end{itemize}
\end{definition}
A straight forward consequence is the following result.
\begin{corollary}
Under assumptions {\bf{A1}}-{\bf{A2}}, if $\U$ is an optimal solution for F-FPCA, then $\U^T$ is a tight frame. In particular, if the targets are the standard basis, then $\frac{d}{r}\U^T$ is a normalized tight frame.
\end{corollary}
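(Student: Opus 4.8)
The plan is to read this corollary as a translation of Proposition~\ref{NLM} (and of the computation inside its proof) into the language of frames, so that essentially no new work is needed. First I would record that an optimal $\U$ for F-FPCA is in particular a \emph{local} minimizer, so Lemma~\ref{local optimal is orthogonal} applies and $\U\in\orth{d}{r}$, i.e.\ $\U^T\U=\I_r$. Writing $\u_1,\dots,\u_d\in\R^r$ for the rows of $\U$ --- equivalently, the columns of $\U^T$ --- this identity reads $\sum_{i=1}^d \u_i\u_i^T=\I_r$. Since $\I_r$ is invertible the vectors $\{\u_i\}_{i=1}^d$ span $\R^r$, hence form a frame, and for every $\v\in\R^r$ we have $\v=\I_r\v=\sum_{i=1}^d\langle\v,\u_i\rangle\u_i$, which is exactly the tight-frame condition with frame bound $A=1$. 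This already proves the first assertion. Assumption~\textbf{A2} enters only through Proposition~\ref{NLM}/Lemma~\ref{local optimal is orthogonal}: it rules out a degenerate optimum of rank $<r$ and thereby guarantees that the $\u_i$ span all of $\R^r$ rather than a proper subspace.

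For the ``in particular'' part I would specialize the constants of Proposition~\ref{NLM} to standard-basis targets $\x_i=\e_i$, $i\in[d]$, so that $n=d$. Then $\norm{\x_i}^2=1$, $H=1$, and \textbf{A2} reduces to $r<d$. By the computation in the proof of Proposition~\ref{NLM}, an optimal $\U$ --- being a local minimizer, hence orthogonal by Lemma~\ref{local optimal is orthogonal} and value-equalized by Lemma~\ref{AE} --- satisfies $f_i(\U)=-\tfrac{r}{n}H=-\tfrac{r}{d}$ for every $i$; moreover $\X=\I_d$, so $\hat{\U}=\X^T\U=\U$, and therefore
\[
  \norm{\u_i}^2=\norm{\hat{\U}_{i:}}^2=-\frac{f_i(\U)}{\norm{\x_i}^2}=\frac{r}{d}\qquad\text{for all }i\in[d].
\]
Thus every column of $\U^T$ has norm $\sqrt{r/d}$, and the rescaled matrix $\sqrt{d/r}\,\U^T$ has unit-norm columns while its frame operator equals $\tfrac{d}{r}\,\U^T\U=\tfrac{d}{r}\I_r$. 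Hence $\sqrt{d/r}\,\U^T$ is a finite normalized tight frame, with the frame bound $d/r$ that is forced on any unit-norm tight frame of $d$ vectors in $\R^r$.

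I do not expect a genuine obstacle: every ingredient --- Lemma~\ref{local optimal is orthogonal}, Lemma~\ref{AE} and the identity $\mathrm{FPCA}=\tfrac{r}{n}H$ --- is already proved, and the corollary only repackages them. The one point worth stating carefully is the normalization constant in the second claim: the norm computation above shows the correct rescaling is by $\sqrt{d/r}$, after which tightness is immediate from semi-orthogonality of $\U$. Finally, for the ``equivalence'' advertised in the surrounding text one should note that the argument is reversible: starting from any normalized tight frame of $d$ vectors in $\R^r$, rescaling the frame vectors by $\sqrt{r/d}$ and stacking them as the rows of a $d\times r$ matrix yields a semi-orthogonal $\U$ with all $f_i(\U)=-\tfrac{r}{d}$, which attains the optimal F-FPCA value and is therefore optimal.
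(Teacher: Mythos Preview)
Your argument is correct and follows the same route as the paper: Lemma~\ref{local optimal is orthogonal} gives $\U^T\U=\I_r$, which is exactly the tight-frame identity with bound $A=1$, and Proposition~\ref{NLM} gives equal row norms $\|\u_i\|^2=r/d$ in the standard-basis case; the paper merely packages the ``semi-orthogonal $\Leftrightarrow$ tight frame'' step and the converse direction into three short auxiliary lemmas, whereas you verify both directly from the definitions. You are also right that the normalizing constant in the second claim should be $\sqrt{d/r}$ rather than $d/r$, since $\|\u_i\|=\sqrt{r/d}$ and rescaling by $\sqrt{d/r}$ is what produces unit-norm columns while preserving tightness.
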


{\em{Sketch of proof (the proof in the appendix):}} As proved before, the solution of F-FPCA is in $\orth{d}{r}$ and the transposition of any $\U\in\orth{d}{r}$ is a tight frame. The second part is true since the optimal solution of F-FPCA is satisfied for all k: $\left\Vert \x_k^T\U\right\Vert^2=\frac{r}{n}H$. For the standard basis we get for all $i,j$: $\norm{\U^T\e_i}=\norm{\U^T\e_j}$ i.e. the norm of all rows of $\U$ are equals.  

It is well known that normalized tight frames can be derived as minimizers of frame potential functions \cite{benedetto2003finite}. The corollary provides an alternative derivation via FPCA with different targets $\x_i$. Depending on the properties of the targets, this allows a flexible data-driven design that will be pursued in future work.

\section{Experimental results}

In this section, we illustrate the efficacy of the different algorithms using numerical experiments. We compare the following competitors:
\begin{itemize}
    \item SDR - a (possibly infeasible) upper bound defined as the solution to (\ref{SDR}) via CVXPY \cite{cvxpy,cvxpy_rewriting}.
    \item PIRSDR - the projection of SDR onto the feasible set using eigenvalue decomposition. \gnote{Before project the solution, the iterative rounding rank reduction from \cite{morgenstern2019fair} was performed.}
    \item F-FPCA - the solution to (\ref{factorized_restore_problem}) via Algorithm 1 with a random initialization.
    \item F-FPCAi -  the solution to (\ref{factorized_restore_problem}) via Algorithm 1 with PIRSDR initialization.
\end{itemize}
To allow easy comparison, we normalize the results by the value of SDR, so that a ratio of $1$ corresponds to a tight solution.

\subsection{Synthetic simulations}
We begin with experiments on synthetic targets with independent, zero mean, unit variance, Gaussian random variables. This is clearly a simplistic setting but it allows control over the different parameters $r$, $n$ and $d$. Each of the experiments was performed $15$ times and we report the average performance.

Rank effect: The first experiment is presented in Fig. 1 and illustrates the dependency on the rank $r$. It is easy to see that even with very small rank, the gap between the upper and lower bounds vanishes. We conclude that in this non-orthogonal setting, the landscape of FPCA is benign as long as the rank is not very small.


\begin{figure}[htbp]
\centerline{\includegraphics[width=0.5\textwidth]{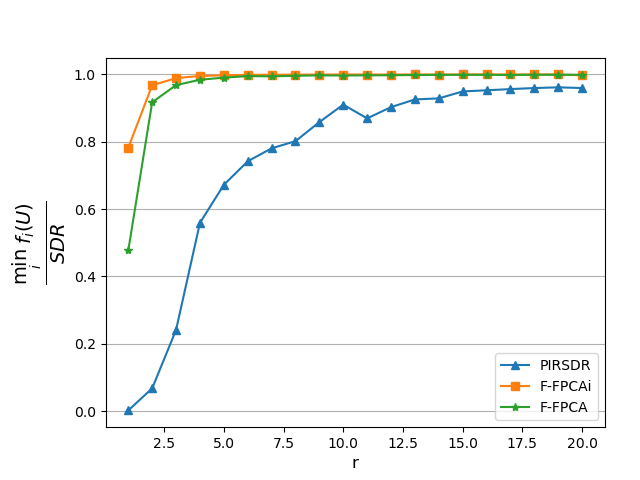}}
\caption{Quality of approximation as a function of the rank ($d=200,n=50$)}
\end{figure}


Orthogonality effect: The second experiment is presented in Fig. 2 and addresses the effect of orthogonality. As explained, the targets are drawn randomly and they tend to orthogonality as $d$ increases. Our analysis proved that the gap should vanish when the targets are exactly orthogonal. The numerical results suggest that this is also true for more realistic and near-orthogonal targets. The optimality gap clearly decreases as $d$ increases.

\begin{figure}[htbp]
\centerline{\includegraphics[width=0.5\textwidth]{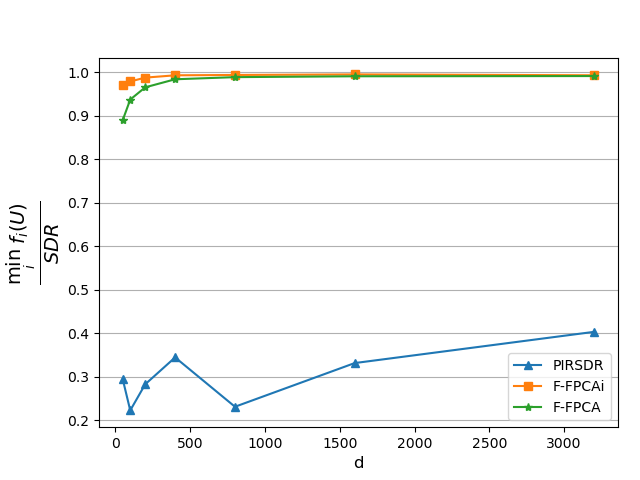}}
\caption{Quality of approximation as a function of the orthogonality ($n=50,r=2$)}
\end{figure}



\subsection{Minerals dataset}
In order to illustrate the performance in a more realistic setting we also considered a real world dataset. We consider the design of hyperspectral matched filters for detection of known minerals. We downloaded spectral signatures of minerals from the \href{https://crustal.usgs.gov/speclab/QueryAll07a.php?page=1}{Spectral Library of United States Geological Survey (USGS)}. We experimented with $114$ different minerals, each with $480$ bands in the range $0.01\mu-3\mu$. Some of the measurements were missing and their corresponding bands were omitted. We then performed PCA and reduced the dimension from $421$ to $\mathbb{R}^{114}$. These vectors were normalized and then centered. Fig. 3 provides the signatures of the first minerals before and after the pre-processing. Finally, we performed fair dimension reduction to $r= 1...6$. Fig. 4 summarizes the quality of the approximation of the different algorithms. As before, it is easy to see that F-FPCA is near optimal at very small ranks. Interestingly, PIRSDR is beneficial as an initialization but shows inferior and non-monotonic performance on its own. \gnote{As expected, all the algorithms easily attain the optimal performance at higher values of $r$.}


\begin{figure}[htbp]
\centerline{\includegraphics[width=0.5\textwidth]{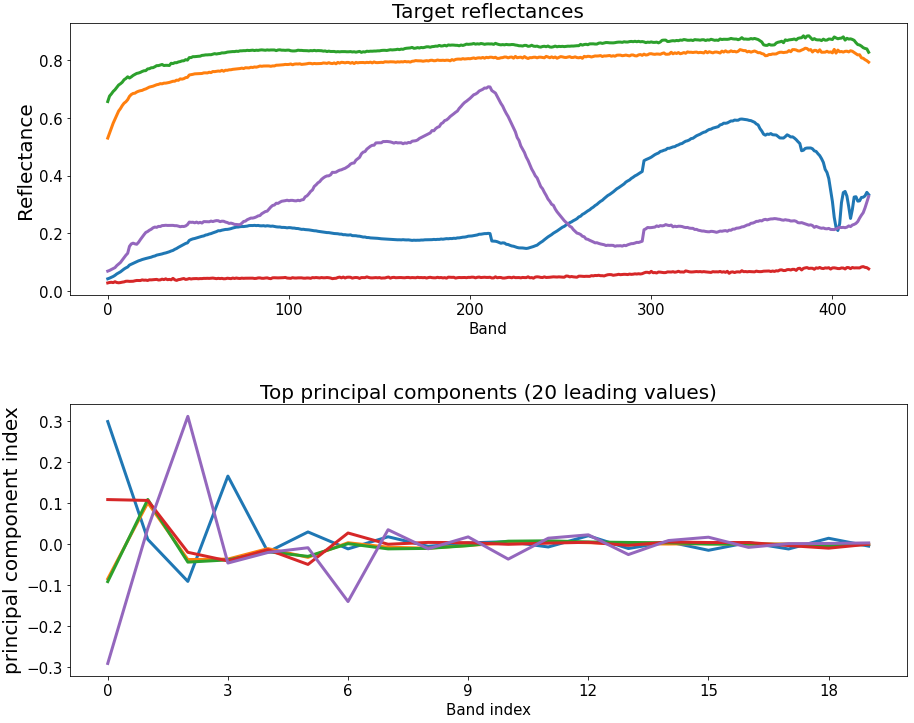}}
\caption{The spectral signatures of Actinolite, Adularia, Albite, Allanite and Almandine.}
\end{figure}

\begin{figure}[htbp]
\centerline{\includegraphics[width=0.5\textwidth]{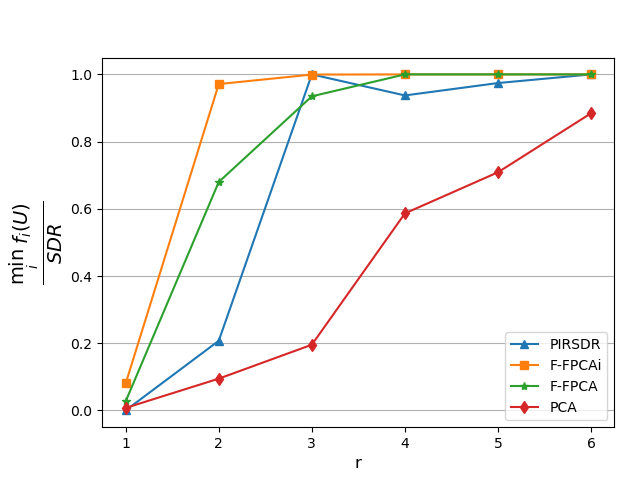}}
\caption{Quality of approximation as a function of the rank: minerals dataset.}
\end{figure}

\gnote{
\subsection{Credit dataset}
Next, we continue to the credit dataset \cite{yeh2009comparisons} that was considered for block-FPCA in  \cite{olfat2019convex,morgenstern2019fair,samadi2018price}. Following these works, we consider the functions
\begin{equation}
f_i(\U)=\norm{\X_{i} - \U\U^T\X_{i}}^2 - PCA(\X_i)
\end{equation}
where $\X_i=\sum_{j=1}^{n_i}\x_j^i\x_j^{iT}$ and $PCA$ is the objective of the standard PCA function of $\X_i$ which is independent of $\U$. 
The results in Fig 6 are identical to those in \cite{morgenstern2019fair} with our additional F-FPCA algorithm. PIRSDR achieves the SDR lower bound at all ranks excepts $r=7$. Remarkably, F-FPCA is optimal in this specific setting and attains the bound for all $r$ without exception. Apparently, the landscape of the credit dataset is benign. We emphasize that this is pure luck and we can easily find other non-orthogonal examples with spurious local minima. In real applications, we recommend running F-FPCA using multiple initializations and choosing the best solution.}
\begin{figure}[htbp] \label{matrix6}
\centerline{\includegraphics[width=0.5\textwidth]{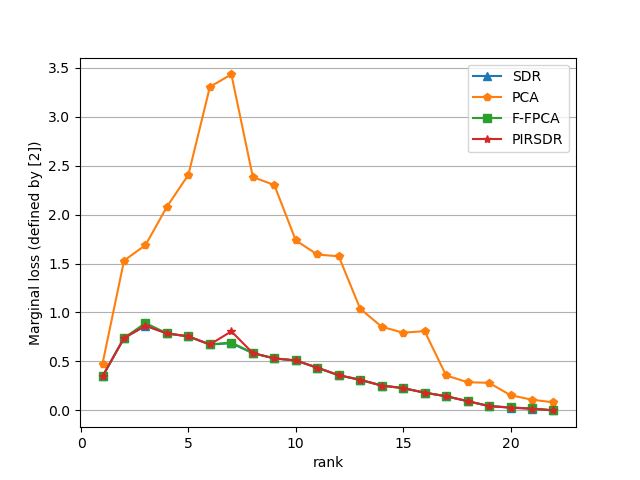}}
\caption{Marginal loss from \cite{morgenstern2019fair} when splitting the credit data to 6 classes}
\end{figure}

\section{Conclusion}
In this paper, we suggested to tackle the problem of fairness in linear dimension reduction by simply using first order methods over a non-convex optimization problem. We provided an analysis of the landscape of this problem in the special case where the targets are orthogonal to each other. We also provided experimental results which support our approach by showing that sub gradient descent is successful also in the near orthogonal case and real world data. 

There are many interesting extensions to this paper that are worth pursuing. Analysis of the near-orthogonal case is still an open question. In addition, a drawback of our approach is the non smoothness of the landscape which might prevent the use of standard convergence bounds for first order methods. This can be treated by approximating the $L_{2,\infty}$ in our formulation by log-sum-exp or $L_{2,p}$ norm for $p<\infty$ functions. Experimental results show that our results can be extended to the block case that is more relevant to machine learning. Finally, we only considered the case of classical linear dimension reduction. Future work may focus on extensions to non-linear methods and tensor decompositions.

\appendices

\section{Proof of Proposition 1}

Let $\O\D\O^{T}$ a truncated EVD decomposition of $\U\U^{T}$, then:
\begin{align*}
    &\left\Vert\x_{i}-\U\U^{T}\x_{i}\right\Vert^{2} - \left\Vert\x_{i}\right\Vert^{2}\\
    =&\left\Vert\x_{i}\right\Vert^{2}+\x_{i}^T\U\U^T\U\U^{T}\x_{i}-2\x_{i}^T\U\U^{T}\x_{i}-\left\Vert\x_{i}\right\Vert^{2}\\
    =&\x_{i}^T\O\D^2\O^{T}\x_{i}-2\x_{i}^T\O\D\O^{T}\x_{i}\\
    =&\x_{i}^T\O(\D^2-2\D)\O^{T}\x_{i}\\
    =&\sum_{l=1}^{r}(\D_{ll}^{2}-2\D_{ll})\left<\O_{:l},\x_i\right>^{2}
\end{align*}

Observe that this function is minimized when $\D_{ll}=1$ for all $l\le r$, so:
\begin{align*}
    \left\Vert\x_{i}-\U\U^{T}\x_{i}\right\Vert^{2} - \left\Vert\x_{i}\right\Vert^{2}\ge-\x_i^T\Pi[\U\U^T]\x_i
\end{align*}

So F-FPCA is equivalent to the following problem (over the orthogonal matrices):
\begin{align*}
\begin{array}{ll}
     \min_{\U\in\orth{d}{r}}  & \max_{{i\in[n]}}\quad \left\Vert\x_{i}-\U\U^{T}\x_{i}\right\Vert^{2}-\left\Vert\x_{i}\right\Vert^{2}
\end{array}
\end{align*}

Now for any orthogonal matrix $\U$ we get:
\begin{align*}
\begin{array}{ll}
     = \left\Vert\x_{i}-\U\U^{T}\x_{i}\right\Vert^{2}-\left\Vert\x_{i}\right\Vert^{2}\\
     = \left\Vert\x_{i}\right\Vert^{2}+\x_{i}^T\U\U^T\U\U^{T}\x_{i}-2\x_{i}^T\U\U^{T}\x_{i}-\left\Vert\x_{i}\right\Vert^{2}\\
     = \x_{i}^T\U\U^{T}\x_{i}-2\x_{i}^T\U\U^{T}\x_{i}\\
     = -\x_{i}^T\U\U^{T}\x_{i}\\
\end{array}
\end{align*}
\\
Finally, observe that:
\begin{itemize}
    \item $\U$ is a feasible solution for the problem above iff $\P=\U\U^T$ is a feasible solution for FPCA.
    \item The objective function of FPCA in $\P$ is equal to the objective function of the problem above in $\U$ (multiplied by $-1$).
\end{itemize}
So we conclude that the problems are equivalent.

\section{Proof of \Lemmaref{local optimal is orthogonal}}
We begin with the following lemma:
\begin{lemma} \label{Necessary condition to A2}
Let $\U\in\orth{d}{r}$. If \textbf{A2} holds then $\forall i \in \mathcal{I}_\U: \quad \norm{\x_i}> \norm{\U\U^T\x_i}$. 
\end{lemma}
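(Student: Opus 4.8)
The plan is to reduce everything to the closed form of $f_i$ on semi-orthogonal matrices that was derived inside the proof of Proposition~\ref{FPCA equivalent to F-FPCA}: for $\U\in\orth{d}{r}$ we have $f_i(\U)=-\x_i^T\U\U^T\x_i=-\norm{\U\U^T\x_i}^2$, where the last equality uses $\U^T\U=\I$ (so $\norm{\U\U^T\x_i}=\norm{\U^T\x_i}$). Consequently $\mathcal{I}_\U=\arg\min_i\norm{\U\U^T\x_i}^2$, so for every $i\in\mathcal{I}_\U$ and every $j\in[n]$ one has $\norm{\U\U^T\x_i}^2\le\norm{\U\U^T\x_j}^2$. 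Since $\U\U^T$ is an orthogonal projection, $\norm{\U\U^T\x_j}\le\norm{\x_j}$ always holds, so the statement amounts to excluding the boundary case $\norm{\U\U^T\x_i}=\norm{\x_i}$ for $i\in\mathcal{I}_\U$, which I would handle by contradiction.

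Next I would set up a Frobenius-norm accounting using $\hat{\U}=\X^T\U$ from \eqref{target axes}. By \textbf{A1} the columns of $\X$ are orthonormal, hence $\X\X^T\preceq\I$ and $\norm{\hat{\U}}_F^2=\mathrm{Tr}(\U^T\X\X^T\U)\le\mathrm{Tr}(\U^T\U)=r$; note this uses only $\U\in\orth{d}{r}$ and does not require $n=d$. Reading this off row by row, $\norm{\hat{\U}_{j:}}^2=\norm{\U^T\x_j}^2/\norm{\x_j}^2=\norm{\U\U^T\x_j}^2/\norm{\x_j}^2$, so $\sum_{j=1}^n\norm{\U\U^T\x_j}^2/\norm{\x_j}^2\le r$.

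Finally, assume toward a contradiction that some $i\in\mathcal{I}_\U$ satisfies $\norm{\U\U^T\x_i}=\norm{\x_i}$. Since $i$ minimizes $\norm{\U\U^T\x_j}^2$ over $j$, every summand above is at least $\norm{\U\U^T\x_i}^2/\norm{\x_j}^2=\norm{\x_i}^2/\norm{\x_j}^2$, so $r\ge\norm{\x_i}^2\sum_{j=1}^n 1/\norm{\x_j}^2=n\,\norm{\x_i}^2/H$, i.e. $\norm{\x_i}^2\le\frac{r}{n}H$. This contradicts \textbf{A2}, since $\norm{\x_i}^2\ge\min_k\norm{\x_k}^2>\frac{r}{n}H$. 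Hence $\norm{\U\U^T\x_i}<\norm{\x_i}$ for every $i\in\mathcal{I}_\U$.

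I do not expect a genuine obstacle: the core is a one-line averaging inequality combined with the non-degeneracy assumption \textbf{A2}. The only points needing care are keeping the three incarnations of the same quantity aligned ($-f_i(\U)$, $\norm{\hat{\U}_{i:}}^2\norm{\x_i}^2$, and $\norm{\U\U^T\x_i}^2$), and justifying $\norm{\hat{\U}}_F^2\le r$ via $\X\X^T\preceq\I$ rather than assuming the spanning case $n=d$ with equality.
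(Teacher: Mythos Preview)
Your proof is correct and follows essentially the same route as the paper: argue by contradiction, use $-f_j(\U)=\norm{\U\U^T\x_j}^2$ for semi-orthogonal $\U$, sum the row norms of $\hat{\U}$ to get a bound by $r$, and conclude that some target has squared norm at most $\frac{r}{n}H$, contradicting \textbf{A2}. The only cosmetic differences are that the paper passes through the index of minimum norm $j\in\arg\min_k\norm{\x_k}$ (whereas you bound $\norm{\x_i}$ directly for the offending $i\in\mathcal{I}_\U$), and that the paper uses $\norm{\hat{\U}}_F^2=r$ under its standing assumption $n=d$, while you more generally justify $\norm{\hat{\U}}_F^2\le r$ via $\X\X^T\preceq\I$.
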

\begin{proof}
Assume in contradiction that there exists $k\in\mathcal{I}_\U$ ($\mathcal{I}_\U:=\arg\max_if_i(\U)$) such that: $\norm{\x_k}= \norm{\U\U^T\x_k}$, and let $j\in\arg\min_{\x_i}\norm{\x_i}$. We get for all $i$:
$$
f_i(\U)\le f_k(\U)\gnote{\le}-\norm{\x_k}^2\le-\norm{\x_j}^2
$$
Now recall the definition of $\hat{\U}$ in (\ref{target axes}) and observe that:
\begin{align*}
    &r=\left\Vert \hat{\U}\right\Vert^2_F\gnote{=}\sum_{i=1}^n \left\Vert\hat{\U}_{i:}\right\Vert^2=\sum_{i=1}^n\frac{-f_i(\U)}{\left\Vert\x_{i}\right\Vert^{2}}
    \ge \sum_{i=1}^n\frac{\left\Vert\x_{j}\right\Vert^{2}}{\left\Vert\x_{i}\right\Vert^{2}}\\
    & \Rightarrow \norm{\x_j}^2\le \frac{r}{\sum_{i=1}^n \frac{1}{\left\Vert\x_{i}\right\Vert^{2}}}
\end{align*}
This means that \textbf{A2} does not hold.
\end{proof}

We will now show that if $\U$ is not orthogonal, then we can decrease either the size of $\mathcal{I}_\U$ or the value of $\max_if_i(\U)$ by choosing an arbitrarily close $\U'$.

\begin{lemma} \label{not optimal can reduce I}
Let $\U\notin\orth{d}{r}$, then
for any $\epsilon>0$ there exists a $\U'$ such that:
\begin{enumerate}
    \item $\left\Vert \U-\U'\right\Vert \le\epsilon$.
    \item Either $|\mathcal{I}_{\U}|>|\mathcal{I}_{\U'}|$, or $\max_if_i(\U)>\max_if_i(\U')$.
\end{enumerate}
\end{lemma}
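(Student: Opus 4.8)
\textbf{Proof proposal for Lemma~\ref{not optimal can reduce I}.}

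The plan is to write $\U$ in terms of its singular value decomposition and exploit the fact that $\U \notin \orth{d}{r}$ means at least one singular value differs from $1$. Write $\U = \O_1 \D \O_2^T$ with $\O_1 \in \orth{d}{r}$, $\O_2 \in \orth{r}{r}$ and $\D = \mathrm{diag}(\sigma_1,\dots,\sigma_r)$. Recall from the proof of Proposition~\ref{FPCA equivalent to F-FPCA} that
\begin{align*}
f_i(\U) = \sum_{l=1}^r (\sigma_l^2 - 2\sigma_l)\langle (\O_1)_{:l}, \x_i\rangle^2,
\end{align*}
and the scalar map $g(\sigma) = \sigma^2 - 2\sigma$ is strictly decreasing on $[0,1]$, strictly increasing on $[1,\infty)$, with minimum at $\sigma = 1$. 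The key point: if some $\sigma_l \ne 1$, I can perturb $\sigma_l$ toward $1$, which strictly decreases $g(\sigma_l)$ and hence strictly decreases $f_i(\U)$ \emph{for every $i$ with $\langle (\O_1)_{:l},\x_i\rangle \ne 0$}, while leaving all other terms untouched.

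First I would isolate an index $l^\star$ with $\sigma_{l^\star}\ne 1$. I split into two cases according to whether the column $(\O_1)_{:l^\star}$ is orthogonal to $\x_i$ for every $i\in\mathcal{I}_\U$. In the first case ($\langle (\O_1)_{:l^\star},\x_i\rangle = 0$ for all $i\in\mathcal{I}_\U$): moving $\sigma_{l^\star}$ toward $1$ by a tiny amount does not change $f_i(\U)$ for any $i\in\mathcal{I}_\U$, but it does \emph{not increase} any $f_i$ and strictly decreases $f_i$ for any $i$ outside $\mathcal{I}_\U$ with $\langle(\O_1)_{:l^\star},\x_i\rangle\ne 0$. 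After this perturbation, the maximum value $\max_i f_i$ is unchanged but — here I need a small additional argument — I claim the active set can only shrink or stay the same, and if it stays the same I have not yet won; so instead I should choose the perturbation magnitude so that it exactly kills (brings below the max) at least the indices $i$ with the smallest gap, or, more cleanly, combine this with the orthogonal-direction argument sketched after Lemma~\ref{AE}: use a Givens rotation in the plane spanned by $(\O_1)_{:l^\star}$ and some $\x_i$, $i\in\mathcal{I}_\U$, to strictly decrease that $f_i$, iterating over $\mathcal{I}_\U$ to strictly decrease $\max_i f_i$. In the second case (some $k\in\mathcal{I}_\U$ has $\langle(\O_1)_{:l^\star},\x_k\rangle\ne 0$): perturbing $\sigma_{l^\star}$ toward $1$ strictly decreases $f_k(\U)$, and decreases (weakly) every other $f_i$; if $k$ was the unique maximizer we immediately get $\max_i f_i(\U) > \max_i f_i(\U')$, and otherwise we iterate the perturbation over the finitely many active indices, at each step either removing one from the active set or, once we have driven all of them down simultaneously via a sequence of coordinated perturbations/rotations, strictly decreasing the max. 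In all cases the perturbations are continuous in a parameter that we send to $0$, so the $\epsilon$-closeness in item~1 is automatic.

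The main obstacle I anticipate is the bookkeeping in the case where $(\O_1)_{:l^\star}$ is orthogonal to \emph{all} active targets $\x_i$, $i\in\mathcal{I}_\U$: then a pure singular-value perturbation keeps $\max_i f_i$ and the active set exactly fixed, so this move alone accomplishes nothing toward condition~2. Resolving this requires either (a) combining the singular-value perturbation with a Givens rotation that rotates $(\O_1)_{:l^\star}$ slightly into the span of the active targets and then exploiting $\sigma_{l^\star}\ne 1$ to get a first-order strict decrease of some active $f_i$, or (b) arguing that Lemma~\ref{Necessary condition to A2} (which guarantees $\norm{\x_i} > \norm{\U\U^T\x_i}$ for $i\in\mathcal{I}_\U$, so each active target has a nonzero component \emph{outside} the column space of $\U$) lets us pick the rotation direction $\w$ as in the sketch after Lemma~\ref{AE}. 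I would pursue route (b): Lemma~\ref{Necessary condition to A2} furnishes, for each active $i$, a direction with positive $\x_i$-component that is currently weighted by a singular value less than $1$ (possibly $0$); nudging that weight upward via a rotation strictly decreases $f_i$, and looping over all of $\mathcal{I}_\U$ strictly decreases the maximum, giving condition~2 with the second alternative, while the total perturbation is controlled by $\epsilon$.
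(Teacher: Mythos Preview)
Your proposal is essentially the paper's own argument: decompose $\U\U^T$, push the defective eigenvalue toward $1$ to (weakly) decrease every $f_i$, and in the stubborn case where the defective eigenvector is orthogonal to all active targets, invoke Lemma~\ref{Necessary condition to A2} to rotate that eigenvector slightly toward $\x_k^\perp$ (the component of some active $\x_k$ outside ${\rm Im}(\U)$) so that the eigenvalue perturbation now bites on $f_k$. The paper makes the hard case concrete by writing $\U''=\O'\D'\O'^{T}$ with $\O'_{:\hat l}=\O_{:\hat l}+\epsilon\x_k^\perp$, which is exactly your route~(a); one bookkeeping slip to fix is that with $\U=\O_1\D\O_2^T$ and $\sigma_l$ the \emph{singular values} of $\U$, the coefficient in the $f_i$ expansion is $\sigma_l^4-2\sigma_l^2$, not $\sigma_l^2-2\sigma_l$ (the paper's $\D_{ll}$ are eigenvalues of $\U\U^T$), though this does not affect the monotonicity argument.
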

\begin{proof}
Let $\O\D\O^{T}$ an EVD decomposition of $\U\U^{T}$, then:
\begin{align*}
    \left\Vert\x_{i}-\U\U^{T}\x_{i}\right\Vert^{2} - \left\Vert\x_{i}\right\Vert^{2}=\sum_{l=1}^{r}(\D_{ll}^{2}-2\D_{ll})\left<\O_{:l},\x_i\right>^{2}
\end{align*}
Due to $\U\notin\orth{d}{r}$, there is an $\hat{l}\le r$ such that ${\D}_{\hat{l},\hat{l}}\ne1$, and an $i$ such that $\left<\O_{:\hat{l}},\x_i\right>\ne0$. Observe that: $h_i({\D}_{\hat{l},\hat{l}})=({\D}_{\hat{l},\hat{l}}^2-2{\D}_{\hat{l},\hat{l}})\left<\O_{:\hat{l}},\x_i\right>^{2}$ has a local minimum only in ${\D}_{\hat{l},\hat{l}}=1$. Therefore, define $\U^{'}=\O\D'\O^T$ where:
$$
{\D}_{\hat{l},\hat{l}}'=\begin{cases}
{\D}_{\hat{l},\hat{l}}'={\D}_{\hat{l},\hat{l}}-\epsilon&\quad{\D}_{\hat{l},\hat{l}}>1\\
{\D}_{\hat{l},\hat{l}}'={\D}_{\hat{l},\hat{l}}+\epsilon&\quad{\D}_{\hat{l},\hat{l}}<1
\end{cases}
$$
and we get $f_j(\U)>f_j(\U')$ for all $j$ such that $\left<\O_{:\hat{l}},\x_j\right>^{2}\ne0$. 

If there exists an $\hat{l}$ such that ${\D}_{\hat{l},\hat{l}}\ne1$ and  $\left|\left\{j|\left<\O_{:\hat{l}},\x_j\right>^{2}\ne0\right\}\cap\mathcal{I}_\U\right|>0$ then we are done.

Otherwise, pick some $\hat{l}$ with ${\D}_{\hat{l},\hat{l}}\ne1$, and after the procedure above take $\x_k\in\mathcal{I}_\U$ and define $\x^\perp_k$ the projection of $\x_k$ onto ${\rm{Im}}(\U)^\perp$  (by \Lemmaref{Necessary condition to A2} $\x_k^\perp\ne0$). Define $\O'$ by adding $\epsilon\x^\perp_k$ to the $\hat{l}'th$ singular vector $\O_{:\hat{l}}$ of $\U'$ and define $\U''=\O'\D'\O'^T$. Now we get that for all $i\in\mathcal{I}_{\U'}$:
\begin{align*}
    &\left<\O_{:\hat{l}}',\x_i\right>^2=\left(\left<\O_{:\hat{l}}+\epsilon\x_k^\perp,\x_i\right>\right)^2\\
    =&\left(\left<\O_{:\hat{l}},\x_i\right>+\epsilon\left<\x_k^\perp,\x_i\right>\right)^2=\epsilon^2\left<\x_k^\perp,\x_i\right>^2\ge0\\
    \Rightarrow &\left\Vert\x_{i}-\U''\U''^{T}\x_{i}\right\Vert^{2}-\left\Vert\x_{i}\right\Vert^{2}=
    \sum_{l=1}^{r}(\D_{ll}'^{2}-2\D_{ll}')\left<\O_{:l}',\x_i\right>^{2}\\
    \le&\sum_{l=1}^{r}(\D_{ll}'^{2}-2\D_{ll}')\left<\O_{:l},\x_i\right>^{2}
\end{align*}
Similarly, for $\x_k$ we get:
\begin{align*}
    &\sum_{l=1}^{r}(\D_{ll}'^{2}-2\D_{ll}')\left<\O_{:l}',\x_k\right>^{2}\\
    =&\sum_{l=1}^{r}(\D_{ll}'^{2}-2\D_{ll}')\left<\O_{:l},\x_k\right>^{2}+\epsilon(\D_{\hat{l}\hat{l}}'^{2}-2\D_{\hat{l}\hat{l}}')\left<\x_k^\perp,\x_k\right>^{2}\\
    <&\sum_{l=1}^{r}(\D_{ll}'^{2}-2\D_{ll}')\left<\O_{:l},\x_k\right>^{2}
\end{align*}
as required.
\end{proof}

We can now apply \Lemmaref{not optimal can reduce I} iteratively as follows. Let $\U\notin\orth{d}{r}$, and let $\epsilon>0$. By \Lemmaref{not optimal can reduce I}: 
\begin{itemize}
    \item There is $\U_{1}$ with $\left\Vert \U_{1}-\U\right\Vert\le\frac{\epsilon}{n}$, s.t.: $|\mathcal{I}_{\U}|>|\mathcal{I}_{\U_1}|$.
    \item There is $\U_{2}$ with $\left\Vert \U_{2}-\U_{1}\right\Vert\le\frac{\epsilon}{n}$, s.t.: $|\mathcal{I}_{\U_1}|>|\mathcal{I}_{\U_2}|$.
    \item ...
    \item There is $\U'$ with $\left\Vert \U'-\U_{K}\right\Vert\le\frac{\epsilon}{n}$, s.t.: $\max_if_i(\U_{K})>\max_if_i(\U')$.
\end{itemize}
Finally, observe that $K + 1\le |\mathcal{I}_U|$, so $\left\Vert \U-\U'\right\Vert\le \epsilon\frac{K+1}{n}\le\epsilon$ and we can find arbitrarily close $\U'$ such that $\max_if_i(\U')<\max_if_i(\U)$ i.e. $\U$ is not a local minimizer.

\section{Proof of \Lemmaref{AE} }

We begin with the following lemma that states that we can utilize the orthogonality of the targets in order to infinitesimally change $\U$ in a manner that increases the value of $f_j$ for some $j\notin\mathcal{I}_\U$, decreases the value of $f_i$ for some $i\in\mathcal{I}_\U$ and does not change the value of $f_k$ for $k\in\mathcal{I}_{\U}\setminus \{i\}$.

\begin{lemma} \label{not equal implies can improve} 
Let $\U\in\orth{d}{r}$ such that there exist $j$ with $f_j(\U)<\max_{l\in[n]}f_l(\U)$. Then, there exists an $i\in\mathcal{I}_\U$ such that:
for any $\epsilon>0$ there exist $\U_{\theta}$ such that:
\begin{enumerate}
    \item $\left\Vert \U-\U_{\theta}\right\Vert \le\epsilon$.
    \item $\forall k\in [n]\setminus\mathcal{I}_\U:\quad f_k(\U_{\theta})<f_i(\U)$
    \item $\forall k\in\mathcal{I}_\U\setminus\left\{ i\right\} :\qquad f_k(\U)=f_k(\U_{\theta})$.
    \item $f_i(\U_{\theta})<f_i(\U)$
\end{enumerate}
\end{lemma}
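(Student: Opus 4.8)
The plan is to construct $\U_\theta$ by applying to $\U$ a one‑parameter family of Givens (plane) rotations. Write $V:=\mathrm{Im}(\U)$, so that for $\U\in\orth{d}{r}$ the matrix $\U\U^T$ is the orthogonal projector onto $V$ and (as established for orthogonal matrices in the proof of Proposition~\ref{FPCA equivalent to F-FPCA}) $f_i(\U)=-\norm{\U\U^T\x_i}^2$. Since the hypothesis forces $\mathcal{I}_\U\subsetneq[n]$, set $W:=\mathrm{span}\{\x_j:j\in\mathcal{I}_\U\}$; by \textbf{A1} the $\x_j$ are orthogonal, hence $\dim W=|\mathcal{I}_\U|<n\le d$, so $W\subsetneq\mathbb{R}^d$, and for every $k\notin\mathcal{I}_\U$ we have $\x_k\perp W$. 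For a unit vector $\w\in W^\perp$ and an index $i\in\mathcal{I}_\U$ (note $\w\perp\x_i$, so $\mathrm{span}\{\w,\x_i\}$ is a genuine plane) let $R_\theta$ be the rotation by $\theta$ inside this plane, acting as the identity on its orthogonal complement, and put $\U_\theta:=R_\theta\U\in\orth{d}{r}$, $V_\theta:=\mathrm{Im}(\U_\theta)=R_\theta V$. Then $\norm{\U-\U_\theta}=\norm{(\I-R_\theta)\U}=O(\theta)$, giving condition~1 for $\theta$ small, and since each $f_k$ is continuous with $f_k(\U)<f_i(\U)$ strictly for $k\notin\mathcal{I}_\U$, condition~2 holds for $\theta$ small.

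Conditions~3 and~4 are where orthogonality of the targets and the choice of the pair $(\w,i)$ enter. For $k\in\mathcal{I}_\U\setminus\{i\}$, \textbf{A1} gives $\x_k\perp\x_i$ and $\x_k\perp\w$ (the latter since $\x_k\in W$), so $\x_k$ lies in the fixed space of $R_\theta$; hence $\U_\theta\U_\theta^T\x_k=R_\theta\U\U^TR_\theta^T\x_k=R_\theta(\U\U^T\x_k)$ has the same norm as $\U\U^T\x_k$, i.e.\ $f_k(\U_\theta)=f_k(\U)$. For condition~4, using $R_\theta^T\x_i=\cos\theta\,\x_i-\sin\theta\,\norm{\x_i}\,\w$ and $\norm{\U_\theta\U_\theta^T\x_i}^2=\norm{\U\U^TR_\theta^T\x_i}^2$, a direct computation gives
\begin{equation*}
\norm{\U_\theta\U_\theta^T\x_i}^2=a\cos^2\theta-2b\sin\theta\cos\theta+c\sin^2\theta ,
\end{equation*}
where $a=\norm{\U\U^T\x_i}^2$, $b=\norm{\x_i}\langle\U\U^T\x_i,\w\rangle$, $c=\norm{\x_i}^2\norm{\U\U^T\w}^2$. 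Consequently $\norm{\U_\theta\U_\theta^T\x_i}^2>\norm{\U\U^T\x_i}^2$ for arbitrarily small $\theta\ne0$ of a suitable sign whenever $b\ne0$, or $b=0$ and $c>a$; in either case $f_i(\U_\theta)<f_i(\U)$, which is condition~4.

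It remains to produce a pair $(\w,i)$ with $b\ne0$ or $c>a$; I would argue by contradiction. Suppose that for every $i\in\mathcal{I}_\U$ and every unit $\w\in W^\perp$ one has $b=0$ and $c\le a$. The relation $b=0$ for all such $\w$ says $\U\U^T\x_i\in W$ for every $i\in\mathcal{I}_\U$, so $\U\U^T$ maps $W$ into $W$ and, by symmetry of $\U\U^T$, also $W^\perp$ into $W^\perp$; hence $V=\mathrm{Im}(\U\U^T)=W_1\oplus W_2$ with $W_1:=\U\U^T W\subseteq W$ and $W_2:=\U\U^T W^\perp\subseteq W^\perp$. If $W_2=\{0\}$ then $V=W_1\subseteq W$, so for any $k\notin\mathcal{I}_\U$ (which exists by hypothesis) $\x_k\perp W\supseteq V$ gives $\U\U^T\x_k=0$ and $f_k(\U)=0$, contradicting $f_k(\U)<\max_l f_l(\U)\le0$. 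Hence $W_2\ne\{0\}$; choosing a unit $\w\in W_2\subseteq V\cap W^\perp$ gives $\U\U^T\w=\w$, so $c=\norm{\x_i}^2$, while \Lemmaref{Necessary condition to A2} (this is where \textbf{A2} is used) gives $\norm{\x_i}^2>\norm{\U\U^T\x_i}^2=a$, contradicting $c\le a$. This establishes the required pair $(\w,i)$, and with it all four conditions of the lemma for $\theta$ chosen small with the appropriate sign.

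The \emph{main obstacle} is the degenerate case in which every admissible first‑order perturbation is flat ($b=0$ for all $\w\in W^\perp$, i.e.\ $\U\U^T$ respects the splitting $W\oplus W^\perp$); there one is forced onto the second‑order term $c>a$, and it is exactly at this point that the strict inequality $\norm{\U\U^T\x_i}<\norm{\x_i}$ from \Lemmaref{Necessary condition to A2}, hence assumption \textbf{A2}, becomes indispensable. Everything else is bookkeeping with plane rotations together with the orthogonality of the targets from \textbf{A1}.
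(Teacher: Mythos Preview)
Your proof is correct and takes essentially the same approach as the paper: both build $\U_\theta$ from a Givens rotation in the plane $\mathrm{span}\{\x_i,\w\}$ with $\w$ orthogonal to $\mathrm{span}\{\x_k:k\in\mathcal{I}_\U\}$, split into a first-order case ($b\neq0$) and a degenerate second-order case ($b=0$, $c>a$), and invoke \Lemmaref{Necessary condition to A2} (hence \textbf{A2}) precisely in the degenerate case. The only cosmetic difference is that the paper supplies explicit choices $\w=\x_j/\norm{\x_j}$ or $\w=\U\U^T\x_j/\norm{\U\U^T\x_j}$ for the two cases, whereas you argue the existence of a suitable $(\w,i)$ by contradiction --- arriving at the same $\w\in V\cap W^\perp$ in the degenerate branch.
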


\begin{proof}
Define $\Rt$, a Given Rotation (for some $\theta$) over the $1,2$ axes, i.e.:
$$
    \Rt_{ij}=\begin{cases}
    \cos\theta & ij=11\quad or\quad ij=22\\
    \sin\theta & ij=12\\
    -\sin\theta & ij=21\\
    \I_{ij} & else
    \end{cases}
$$
along with two orthogonal vectors
\begin{eqnarray}
 \v_1 &=& \frac{\x_{i}}{\norm{\x_{i}}}\nonumber\\
  \v_2 &=&  \begin{array}{lcr}
     
        \begin{cases}
            \frac{\x_{j}}{\norm{\x_{j}}} & \exists i\in\mathcal{I}_\U:\; \x_i^T\U\U^T\x_{j}\ne0\\
            \frac{\U\U^T\x_{j}}{\norm{\U\U^T\x_{j}}} & {\rm{else}}
        \end{cases}
    \end{array}
\end{eqnarray}
and \gnote{an orthonormal basis for their orthogonal complement in $\mathbb{R}^d$: } $\V=\left(\v_1,...\v_d\right)$. Now define: $\U_\theta=\V\Rt\V^T\U$ and we get:\\

\textbf{1+2 is true, since:} 

$h_1\left(\theta\right):=\U_{\theta},\quad h_2(\theta):=f_j(\U_{\theta})$ are continuous functions.\\

\textbf{3 is true, since:} 

For all $k\in\mathcal{I}_U\setminus\left\{i\right\}:\x_k\perp\v_1,\v_2$ thus $\Rt\V^T\x_k=\I\V^T\x_k$ and:
\begin{align*}
    \x_{k}^T\U_{\theta}\U_{\theta}^{T}\x_{k}
    =&\x_{k}^T\V\Rt\V^T\U\U^{T}\V\Rt^{T}\V^T\x_{k}\\
    =&\x_{k}^T\V\I\V^T\U\U^{T}\V\I\V^T\x_{k}\\
    =&\x_{k}^T\U\U^{T}\x_{k}\\
\end{align*}

\textbf{In order to show 4 we use the equality in (\ref{diff_fU})} (\gnote{In the next page}, proof is in the appendix, since it is quite technical):

\begin{figure*}
\begin{align}\label{diff_fU}
    f_i(\U_{\theta})-f_i(\U)=\begin{cases}
\sin\left(2\theta\right)\e_{i}^{T}\hat{\U}\hat{\U}^{T}\e_{j}+\sin^{2}\left(\theta\right)\left(\e_{j}^T\hat{\U}\hat{\U}^{T}\e_{j}-\e_{i}^T\hat{\U}\hat{\U}^{T}\e_{i}\right) & \exists i\in\mathcal{I}_\U:\; \x_i^T\U\U^T\x_{j}\ne0\\
(\norm{\x_{i}}^2-\norm{\U^T\x_{i}}^2){\sin(\theta)^2} & else
\end{cases}
\end{align} 
\end{figure*}

Now, if $\exists i\in\mathcal{I}_\U:\; \x_i^T\U\U^T\x_{j}\ne0$:
\begin{itemize}
    \item If: $\x_i^T\U\U^T\x_{j} < 0$ then for any $\frac{\pi}{2}>\theta>0$:  $\sin\left(2\theta\right)\e_{i}^{T}\hat{\U}\hat{\U}^{T}\e_{j}<0$.
    \item If: $\x_i^T\U\U^T\x_{j} > 0$ then for any $-\frac{\pi}{2}<\theta<0$:  $\sin\left(2\theta\right)\e_{i}^{T}\hat{\U}\hat{\U}^{T}\e_{j}<0$.
\end{itemize}
and since $\sin(\theta)^2=o(\sin(2\theta))$, for small enough $|\theta|$ we get: $f_i(\U)<f_i(\U_{\theta})$.

On the other hand, By \Lemmaref{Necessary condition to A2}  $\norm{\x_{i}}^2>\norm{\U^T\x_{i}}^2$, so if $\forall i\in\mathcal{I}_\U:\; \x_i^T\U\U^T\x_{j}=0$ then: $$(\norm{\U^T\x_{i}}^2-\norm{\x_{i}}^2){\sin(\theta)^2}<0$$ 

\end{proof}

Armed with these results, we proceed to the rest of the proof of \Lemmaref{AE}. Assume $f_i(\U)<\max_lf_l(\U)$ for some $i$. By \Lemmaref{not equal implies can improve}, let $\epsilon>0$, then: 
\begin{itemize}
    \item There is $\U_{1}$ with $\left\Vert \U_{1}-\U\right\Vert\le\frac{\epsilon}{K}$, s.t.: $\max_if_i(\U_1)=\max_if_i(\U)$, and  $|\mathcal{I}_{\U_1}|=|\mathcal{I}_{\U}|-1$. 
    \item ...
    \item There is $\U_{K}$ with $\left\Vert \U_{K}-\U_{K-1}\right\Vert\le\frac{\epsilon}{K}$, s.t.: $\max_if_i(\U_{K})<\max_if_i(\U_{K-1})$.
\end{itemize}
Finally, observe that $\left\Vert \U-\U_{K}\right\Vert\le{\epsilon}$  \gnote{and $K\le|\mathcal{I}_U|$,} so we can find arbitrarily close $\U_K$ such that $\max_if_i(\U_{K})<\max_if_i(\U)$ i.e. $\U$ is not a local maximizer.

\section{Proof of \propref{MT}}

\begin{proof}
Given $\{\x_i\}_{i=1}^n$, and recall the definition of $\X$ in (\ref{target axes}). In the SDR problem we solve:
\begin{align*}
    &\begin{array}{ll}
        \max_{{\bf{P}}\in\mathbb{S}^d_+} & \min_{i\in[n]} \x_{i}^{T}{\bf{P}}\x_{i}\\
        {\rm{s.t.}} &   {\rm{Trace}}\left({\bf{P}}\right)\le r\\
             &   {\bf{0}}\preceq {\bf{P}}\preceq {\bf{I}}
    \end{array}\\
    =&\begin{array}{ll}
        \max_{{\bf{P}}\in\mathbb{S}^d_+} & \min_{i\in[n]} \left\Vert \x_{i}\right\Vert^{2}\e_{i}^T\X\P\X^T\e_{i}\\
        {\rm{s.t.}} &   {\rm{Trace}}\left({\bf{P}}\right)\le r\\
             &   {\bf{0}}\preceq {\bf{P}}\preceq {\bf{I}}
    \end{array}\\
    =&\begin{array}{ll}
        \max_{{\bf{P}}\in\mathbb{S}^d_+} & \min_{i\in[n]} \left\Vert \x_{i}\right\Vert^{2}[\X\P\X^T]_{ii}\\
        {\rm{s.t.}} &   {\rm{Trace}}\left(\X\P\X^T\right)\le r\\
             &   {\bf{0}}\preceq \X\P\X^T\preceq {\bf{I}}
    \end{array}\\
    =&\begin{array}{ll}
        \max_{\hat{\P}\in\mathbb{S}^d_+} & \min_{i\in[n]} \left\Vert \x_{i}\right\Vert^{2}\hat{\P}_{ii}\\
        {\rm{s.t.}} &   {\rm{Trace}}\left(\hat{\P}\right)\le r\\
             &   {\bf{0}}\preceq \hat{\P}\preceq {\bf{I}}
    \end{array}
\end{align*}
where $\X\P\X^T=\hat{\P}$ and we have used the orthogonality of $\X$. 

Now, define $\hat{\P}$ as a diagonal matrix with
$
\hat{\P}_{ii}=\frac{r}{n}\cdot\frac{H}{\norm{\x_{i}}^{2}}
$.

It is easy to verify that $\hat{\P}$ is feasible and yields an objective of $\frac{r}{n}H$. Now let $\hat{\P}'$ such that $\min_k\left\Vert\x_{k}\right\Vert^{2}\hat{\P}_{kk}'>\frac{r}{n}H$, then:
\begin{align*}
     &\forall i: \left\Vert\x_{i}\right\Vert^{2}\hat{\P}_{ii}'> \frac{r}{n}H\\ \Rightarrow& \forall i:\hat{\P}_{ii}'>\frac{\frac{r}{n}H}{\left\Vert\x_{i}\right\Vert^{2}}\\
    \Rightarrow &{\rm{Trace}}(\hat{\P}')=\sum_{i=1}^n\hat{\P}_{ii}'>\sum_{i=1}^n\frac{\frac{r}{n}H}{\left\Vert\x_{i}\right\Vert^{2}}=r
\end{align*}
so $\hat{\P}'$ is not feasible and we conclude that $\hat{\P}$ is optimal for SDR. By \propref{NLM} the optimal value for F-FPCA is $-\frac{r}{n}H$ so by \propref{FPCA equivalent to F-FPCA} the value of FPCA is $\frac{r}{n}H$ so SDR=FPCA (but this solution might not be low rank, as in our positive definite construction).
\end{proof}

\section{Proof of Corollary 1}

We start the proof of the proposition by the observation that tight frame is characterized by the standard basis:

\begin{lemma}\label{standard basis enough}
A frame $\{\u_i\}_{i=1}^n\subset\mathbb{R}^r$ is tight with frame bound A if and only if $\forall \e_i$ in the standard basis of $\R^r$:
    $$\e_i=\frac{1}{A}\sum_{i=1}^n\left<\e_i,\u_i\right>\u_i$$
\end{lemma}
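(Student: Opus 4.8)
The plan is to use linearity of the reconstruction map, so that an identity which holds on a spanning set — in particular on the standard basis of $\R^r$ — automatically extends to all of $\R^r$.

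First I would introduce the frame operator $S:\R^r\to\R^r$, $S\v=\sum_{j=1}^n\left<\v,\u_j\right>\u_j=\big(\sum_{j=1}^n\u_j\u_j^T\big)\v$, and note that it is linear in $\v$ because each summand $\left<\v,\u_j\right>\u_j$ is. In this language, $\{\u_i\}$ being tight with bound $A$ means precisely $\tfrac1A S\v=\v$ for every $\v\in\R^r$ (equivalently $S=A\I$), while the condition in the statement is $\tfrac1A S\e_i=\e_i$ for each standard basis vector $\e_i$, $i=1,\dots,r$.

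The forward direction is immediate: if $\tfrac1A S\v=\v$ for all $\v$, take $\v=\e_i$. For the converse, assume $\tfrac1A S\e_i=\e_i$ for $i=1,\dots,r$. Writing an arbitrary $\v=\sum_{i=1}^r v_i\e_i$ and using linearity of $S$,
\[
\tfrac1A S\v=\tfrac1A S\Big(\sum_{i=1}^r v_i\e_i\Big)=\sum_{i=1}^r v_i\,\tfrac1A S\e_i=\sum_{i=1}^r v_i\e_i=\v ,
\]
so the tightness identity holds for every $\v\in\R^r$. This is just the principle that two linear maps agreeing on a basis coincide, applied to $\tfrac1A S$ and $\I$.

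I do not expect a real obstacle: the only points to be careful about are checking that $\v\mapsto\left<\v,\u_j\right>\u_j$ is genuinely linear (so the scalars $v_i$ may be pulled out of $S$), and observing that the ``frame'' (spanning) hypothesis plays no role in this equivalence beyond forcing $A>0$ — the statement is really about an arbitrary finite family $\{\u_i\}_{i=1}^n\subset\R^r$. It is also worth flagging that the summation index in the displayed condition should read $j$ rather than $i$ (it is a bound index, distinct from the fixed $\e_i$), which I would silently correct.
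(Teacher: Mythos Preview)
Your argument is correct and is essentially the same as the paper's: both expand an arbitrary $\v$ in the standard basis and use linearity of $\v\mapsto\sum_j\langle\v,\u_j\rangle\u_j$ to extend the identity from the $\e_i$ to all of $\R^r$. The paper writes out the sums directly rather than naming the frame operator $S$, and only spells out the converse direction, but the substance is identical.
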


\begin{proof}
Observe that if $\;\forall \e_{i}\in\left\{ \e_{i}\right\} _{i=1}^{r}$ we have: $\e_{i}=\frac{1}{A}\sum_{j=1}^{n}\left\langle \e_{i},\u_{j}\right\rangle \u_{j}$ than we get for all $\v\in\mathbb{R}^{r}$:
\begin{align*}
    &\v=	\sum_{j=1}^{r}\v_{j}\e_{j}
    =	\sum_{j=1}^{r}\v_{j}\frac{1}{A}\sum_{i=1}^{n}\left\langle \e_{j},\u_{i}\right\rangle \u_{i}\\
    =&	\frac{1}{A}\sum_{i=1}^{n}\left\langle \sum_{j=1}^{r}\v_{j}\e_{j},\u_{i}\right\rangle \u_{i}
    =	\frac{1}{A}\sum_{i=1}^{n}\left\langle \v,\u_{i}\right\rangle \u_{i}
\end{align*}
\end{proof}

Now we use the observation above to claim that tight frame is actual the transposition of semi orthogonal matrix:

\begin{lemma} \label{tight frame iff scaled orthonormal}
Let $\U^T=(\u_1,...,\u_n)\in\mathbb{R}^{r\times n}$. $\{\u_i\}_{i=1}^n$ is a tight frame with frame bound $A$ iff $\U$ has orthogonal columns with norm $\sqrt{A}$. 
\end{lemma}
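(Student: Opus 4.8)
The plan is to prove Lemma~\ref{tight frame iff scaled orthonormal} directly from the characterization in Lemma~\ref{standard basis enough}, by writing the tight-frame condition in matrix form. First I would set $\U^T = (\u_1,\dots,\u_n) \in \R^{r\times n}$, so that $\U \in \R^{n\times r}$ has the $\u_i$ as its rows. The key observation is that $\sum_{i=1}^n \langle \v, \u_i\rangle \u_i = \U^T \U \v$ for every $\v \in \R^r$, since $(\U^T\U)_{kl} = \sum_i (\u_i)_k (\u_i)_l$. Hence $\{\u_i\}$ is a tight frame with bound $A$ exactly when $\U^T\U \v = A\v$ for all $\v$, i.e. $\U^T\U = A\,\I_r$. (By Lemma~\ref{standard basis enough} it even suffices to check this on the standard basis vectors $\e_1,\dots,\e_r$, which is the same statement.)

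Next I would translate $\U^T\U = A\,\I_r$ into the claimed geometric statement about the columns of $\U$. Writing $\U = (\w_1,\dots,\w_r)$ with $\w_j \in \R^n$ the columns, the entry $(\U^T\U)_{jl} = \w_j^T \w_l = \langle \w_j, \w_l\rangle$. So $\U^T\U = A\,\I_r$ says precisely that $\langle \w_j,\w_l\rangle = 0$ for $j\neq l$ and $\norm{\w_j}^2 = A$ for all $j$, i.e. $\U$ has orthogonal columns each of norm $\sqrt{A}$. For the forward direction one also needs that $\{\u_i\}$ being a \emph{frame} (spanning $\R^r$) is automatic once $\U^T\U = A\I_r$ with $A > 0$, since then $\U$ has rank $r$ and its rows span $\R^r$; conversely $A>0$ is forced because the $\w_j$ are nonzero (a frame with bound $0$ would force $\v = 0$ for all $\v$). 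This closes the equivalence in both directions.

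Finally, I would assemble the corollary itself. By Proposition~\ref{FPCA equivalent to F-FPCA} and Lemma~\ref{local optimal is orthogonal} (as invoked in the sketch), an optimal $\U$ for F-FPCA lies in $\orth{d}{r}$, so its columns are orthonormal; by Lemma~\ref{tight frame iff scaled orthonormal} with $A=1$, $\U^T$ is a tight frame. For the normalized-tight-frame claim with standard-basis targets: Proposition~\ref{NLM} gives $\norm{\x_k^T\U}^2 = \norm{\U^T\x_k}^2 = \tfrac{r}{n}H$ for every $k$; with $\x_k = \e_k$ and $H = d$ (all norms equal $1$, so $n=d$ here), every row of $\U$ has squared norm $\tfrac{r}{d}$, hence every row of $\tfrac{d}{r}\U^T$'s transpose has norm $1$, so $\tfrac{d}{r}\U^T$ is a normalized tight frame (the scaling multiplies the frame bound but keeps the vectors unit-norm after the appropriate normalization as stated).

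The main obstacle, such as it is, is bookkeeping rather than mathematics: keeping the transpose conventions straight (rows of $\U$ versus the frame vectors $\u_i$, and columns of $\U$ versus the identity $\I_r$), and being careful that ``frame'' requires the spanning property, which must be shown to follow from $\U^T\U = A\I_r$ with $A>0$ rather than assumed. Everything else is a one-line matrix identity.
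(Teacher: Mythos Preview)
Your proof of the lemma is correct and follows essentially the same route as the paper: both reduce the tight-frame condition with bound $A$ to the matrix identity $\U^T\U = A\I_r$ (invoking Lemma~\ref{standard basis enough} to restrict attention to the standard basis), and then read this off as orthogonality of the columns of $\U$ with common norm $\sqrt{A}$. Your third paragraph goes beyond the lemma to the surrounding corollary and contains a slip---for standard-basis targets $\norm{\x_i}=1$ gives $H=1$, not $H=d$---but this does not affect the lemma itself.
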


\begin{proof}

Consider equality 1 below: 
\begin{align*}
    &\U^{T}\U=\left(\u_{1},...,\u_{n}\right)\left(\u_{1},...,\u_{n}\right)^{T}\\
    =&\left(\begin{array}{c}
    \\
    \sum_{j=1}^{n}\u_{j}^{1}\u_{j},\\
    \\
    \end{array}...,\begin{array}{c}
    \\
    \sum_{j=1}^{n}\u_{j}^{r}\u_{j}\\
    \\
    \end{array}\right)\\
    =^1&
    \left(\begin{array}{c}
    \\
    A\e_{1}\\
    \\
    \end{array}...\begin{array}{c}
    \\
    A\e_{r}\\
    \\
    \end{array}\right)=A\I
\end{align*}
Observe that $\U\U^T=A\I$ iff $\U$ has orthogonal columns with norm $\sqrt A$. Equality 1 also holds iff $A\e_{i}=\sum_{j=1}^{n}\u_{j}^{i}\u_{j}=\sum_{j=1}^{n}\left\langle \e_{i},\u_{j}\right\rangle \u_{j}$ which holds iff $\{\u_i\}_{i=1}^n$ is a tight frame with frame bound $A$ (by \Lemmaref{standard basis enough}), so we conclude that the conditions are equivalent.

\end{proof}

\begin{lemma} \label{sum is r}
 If for all k: $\left\Vert \x_k^T\U\right\Vert^2=\frac{r}{n}H$, then: $\norm{\U}^2=r$.
\end{lemma}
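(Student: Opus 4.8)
The plan is to read $\norm{\U}^2$ as the squared Frobenius norm $\mathrm{Tr}(\U\U^{T})$ and to evaluate it by resolving the identity of $\mathbb{R}^{d}$ in the orthonormal basis supplied by the normalized targets. Under assumption \textbf{A1} the targets are nonzero and mutually orthogonal, so $\{\x_{k}/\norm{\x_{k}}\}_{k=1}^{n}$ is an orthonormal system; using the standing reduction to effective dimension $n=d$, it is an orthonormal basis of $\mathbb{R}^{d}$, hence
\[
\I \;=\; \sum_{k=1}^{n}\frac{\x_{k}\x_{k}^{T}}{\norm{\x_{k}}^{2}}.
\]

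I would then substitute this into $\norm{\U}^{2}=\mathrm{Tr}(\I\,\U\U^{T})$, pull the sum out of the trace, and use $\mathrm{Tr}(\x_{k}\x_{k}^{T}\U\U^{T})=\x_{k}^{T}\U\U^{T}\x_{k}=\norm{\x_{k}^{T}\U}^{2}$ to obtain
\[
\norm{\U}^{2} \;=\; \sum_{k=1}^{n}\frac{\norm{\x_{k}^{T}\U}^{2}}{\norm{\x_{k}}^{2}}.
\]
Plugging in the hypothesis $\norm{\x_{k}^{T}\U}^{2}=\frac{r}{n}H$ and recalling that by definition of the harmonic mean $\sum_{k}\norm{\x_{k}}^{-2}=n/H$ collapses the sum:
\[
\norm{\U}^{2} \;=\; \frac{rH}{n}\sum_{k=1}^{n}\frac{1}{\norm{\x_{k}}^{2}} \;=\; \frac{rH}{n}\cdot\frac{n}{H} \;=\; r.
\]

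The computation itself is routine; the only point that needs care is the resolution of the identity above, which is precisely where assumption \textbf{A1} (orthogonality of the targets) together with the reduction $n=d$ is used, so that the normalized targets actually span $\mathbb{R}^{d}$. More generally it would be enough that $\mathrm{Im}(\U)$ lies in the span of the targets, which holds for the F-FPCA optimizers of interest, so no substantive obstacle arises beyond this bookkeeping.
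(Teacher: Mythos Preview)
Your proof is correct and is essentially the same as the paper's: the paper passes to $\hat{\U}=\X^{T}\U$ with $\X=(\x_1/\norm{\x_1},\dots,\x_n/\norm{\x_n})$, uses that $\X$ is orthogonal (so $\norm{\U}_F=\norm{\hat{\U}}_F$), and then sums the squared row norms $\norm{\e_i^{T}\hat{\U}}^{2}=\norm{\x_i^{T}\U}^{2}/\norm{\x_i}^{2}$---which is exactly your resolution-of-identity computation written in different notation. Your remark that it suffices for $\mathrm{Im}(\U)$ to lie in $\mathrm{span}\{\x_k\}$ is a valid and slightly more general observation than what the paper states.
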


\begin{proof}
Recall the definition of $\hat{\U}$ in (\ref{target axes}) and observe that:
\begin{align*}
    &\left\Vert \U\right\Vert_F^2
    =\left\Vert \hat{\U}\right\Vert_F^2
    =\sum_{i=1}^n\norm{\e_i^T\hat{\U}}^2
    =\sum_{i=1}^n\frac{1}{\norm{{\x}_i}^2}\norm{{\x}_i^T{\U}}^2\\
    =&\sum_{i=1}^n\frac{1}{\norm{{\x}_i}^2}r\left({\sum_{i=1}^n \frac{1}{\left\Vert \x_{i}\right\Vert^{2}}}\right)^{-1}
    =r
\end{align*}
\end{proof}

Now, let $\U\in\mathbb{R}^{d\times r}$ an optimal solution for F-FPCA, by \Lemmaref{local optimal is orthogonal} the columns of $\U$ are orthonormal, so by \Lemmaref{tight frame iff scaled orthonormal} $\U$ is tight frame and by \propref{NLM} we have for all $k$:
$$
\norm{\x_k^T\U}^2=f_k(\U)=\frac{r}{n}H
$$
On the other hand, let $\U^T$ a tight frame as above, by \Lemmaref{tight frame iff scaled orthonormal} the columns of $\U$ are orthogonal and have the same norm. By \Lemmaref{sum is r} $\norm{\U}_F^2=r$ so the columns has unit norms, i.e. the columns are orthonormal and for all $i\in[n]$:
\begin{align*}
    f_i(\U)
    =-\x_{i}^T\U\U^T\x_{i}
    =-\frac{r}{n}H
\end{align*}
i.e. $\min_i\left\Vert \x_i-\U\U^T\x_i\right\Vert^2-\left\Vert \x_{i}\right\Vert^{2}=-\frac{r}{n}H$ which is the optimal target. Finally, if $\forall i:\; \x_i=\e_i$, then F-FPCA is reduced to the problem of finding 'normalized tight frame'.

\section*{Acknowledgment}
The authors would like to thank Uri Okon who initiated this research and defined the problem, as well as Gal Elidan. This work was partially supported by ISF grant 1339/15.

\ifCLASSOPTIONcaptionsoff
  \newpage
\fi

\bibliography{IEEEabrv,bib}
\bibliographystyle{IEEEtran}

\begin{figure*}
\section{Proof of equation (11)}

\begin{lemma}
Let $\U\in\orth{d}{r}$ and assume $\forall i\in\mathcal{I}_\U:\; \x_i^T\U\U^T\x_{j}=0$. Define: 
$$\v_2 = \frac{\U\U^T\x_{j}}{\norm{\U\U^T\x_{j}}},\;\;\v_1 = \frac{\x_{i}}{\norm{\x_{i}}}$$
for some $i\in\mathcal{I}_{\U}$ and complete these vectors to orthonormal basis: $\V=\left(\v_1,...\v_d\right)$, and define: $\U_\theta=\V\Rt\V^T\U$, then:
\begin{align*}
    f_i(\U)-f_i(\U_{\theta})=(\norm{\x_{i}}^2-\norm{\U^T\x_{i}}^2){\sin(\theta)^2}
\end{align*} 

\end{lemma}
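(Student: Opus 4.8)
The plan is to reduce the statement to a single norm computation. First I would observe that $\V\Rt\V^{T}$ is an orthogonal $d\times d$ matrix, being a product of orthogonal matrices, so $\U_\theta=\V\Rt\V^{T}\U$ again lies in $\orth{d}{r}$. Hence, exactly as in the proof of \propref{FPCA equivalent to F-FPCA}, both $\U$ and $\U_\theta$ are semi-orthogonal and $f_i(\U)=-\x_i^{T}\U\U^{T}\x_i=-\norm{\U^{T}\x_i}^2$, $f_i(\U_\theta)=-\norm{\U_\theta^{T}\x_i}^2$. Thus it suffices to establish
\begin{align*}
\norm{\U_\theta^{T}\x_i}^2=\cos^{2}(\theta)\,\norm{\U^{T}\x_i}^2+\sin^{2}(\theta)\,\norm{\x_i}^2 ,
\end{align*}
since subtracting $\norm{\U^{T}\x_i}^2$ from both sides gives $f_i(\U)-f_i(\U_\theta)=\norm{\U_\theta^{T}\x_i}^2-\norm{\U^{T}\x_i}^2=(\norm{\x_i}^2-\norm{\U^{T}\x_i}^2)\sin^{2}(\theta)$ as claimed.

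Next I would push $\x_i$ through the three factors of $\U_\theta^{T}=\U^{T}\V\Rt^{T}\V^{T}$. Because $\v_1=\x_i/\norm{\x_i}$ is the first column of the orthonormal basis $\V$, we have $\V^{T}\x_i=\norm{\x_i}\,\e_1$. Applying the transposed Givens rotation on the $1,2$ axes (using the stated sign convention for $\Rt$) gives $\Rt^{T}\V^{T}\x_i=\norm{\x_i}\bigl(\cos(\theta)\e_1+\sin(\theta)\e_2\bigr)$, and mapping back through $\V$ yields $\V\Rt^{T}\V^{T}\x_i=\cos(\theta)\x_i+\sin(\theta)\norm{\x_i}\v_2$. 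Finally I would left-multiply by $\U^{T}$ and simplify $\U^{T}\v_2$: since $\U^{T}\U=\I$ and $\U\U^{T}$ is a projection one has $\norm{\U\U^{T}\x_j}=\norm{\U^{T}\x_j}$, so $\U^{T}\v_2=\U^{T}\x_j/\norm{\U^{T}\x_j}$ is a unit vector, and
\begin{align*}
\U_\theta^{T}\x_i=\cos(\theta)\,\U^{T}\x_i+\sin(\theta)\,\frac{\norm{\x_i}}{\norm{\U^{T}\x_j}}\,\U^{T}\x_j .
\end{align*}

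The last step is to expand $\norm{\U_\theta^{T}\x_i}^2$ from this expression: the two squared terms contribute $\cos^{2}(\theta)\norm{\U^{T}\x_i}^2+\sin^{2}(\theta)\norm{\x_i}^2$, and the cross term is a multiple of $\langle\U^{T}\x_i,\U^{T}\x_j\rangle=\x_i^{T}\U\U^{T}\x_j$, which is zero by the standing hypothesis that $\x_i^{T}\U\U^{T}\x_j=0$ for the chosen $i\in\mathcal{I}_\U$. This yields the displayed identity and hence the lemma. I do not expect a real obstacle here; the content is essentially bookkeeping once the substitution $\V^{T}\x_i=\norm{\x_i}\e_1$ is made. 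The only points that need care are matching the sign/placement convention of $\Rt$ so that $\Rt^{T}\e_1$ lands in $\mathrm{span}(\e_1,\e_2)$ with the right coefficients, and making explicit that the vanishing of the cross term uses the hypothesis precisely for the index $i\in\mathcal{I}_\U$ that defines $\v_1$ — this is exactly why the ``else'' branch of equation~(11) collapses to this clean form.
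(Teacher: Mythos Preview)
Your proof is correct and follows the same underlying idea as the paper: reduce $f_i(\U)-f_i(\U_\theta)$ to $\norm{\U_\theta^{T}\x_i}^2-\norm{\U^{T}\x_i}^2$, exploit the key substitution $\V^{T}\x_i=\norm{\x_i}\e_1$, and use the hypothesis $\x_i^{T}\U\U^{T}\x_j=0$ together with $\U\U^{T}\v_2=\v_2$ to make the cross contribution vanish.

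The organization differs: the paper first applies the polarization identity $\norm{a}^2-\norm{b}^2=(a-b)^{T}(a+b)$ with $a=\U_\theta^{T}\x_i$, $b=\U^{T}\x_i$, introduces the auxiliary vector $\hat\x_i=\U^{T}\V(\Rt^{T}-\I)\V^{T}\x_i$, and then computes $\hat\x_i^{T}\hat\x_i$ and $2\hat\x_i^{T}\U^{T}\x_i$ separately before cancelling the $1-\cos\theta$ terms. Your route is more direct: you compute $\U_\theta^{T}\x_i$ explicitly as $\cos(\theta)\U^{T}\x_i+\sin(\theta)\frac{\norm{\x_i}}{\norm{\U^{T}\x_j}}\U^{T}\x_j$ and square, so the answer appears immediately without the intermediate cancellation. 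Both rely on exactly the same two facts ($\v_2$ lies in the column space of $\U$, and the cross term is proportional to $\x_i^{T}\U\U^{T}\x_j=0$); your version simply avoids the detour through the difference-of-squares factorization. The sign of the $\sin\theta$ coefficient depends on the convention for $\Rt$ (the paper's own computation carries the opposite sign from the stated definition), but as you note this is immaterial once the expression is squared.
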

\begin{proof}
\begin{align*}
    f_i(\U)-f_i(\U_{\theta})
    =&\x_{i}^T\U_{\theta}\U_{\theta}^{T}\x_{i}-\x_{i}^T\U\U^{T}\x_{i}\\
    =&\left(\U_{\theta}^{T}\x_{i}-\U^{T}\x_{i}\right)^T\left(\U_{\theta}^{T}\x_{i}+\U^{T}\x_{i}\right)\\
    =&\left(\U^{T}\V\Rt^{T}\V^T\x_{i}-\U^{T}\V\V^T\x_{i}\right)^T\left(\U^{T}\V\Rt^{T}\V^T\x_{i}+\U^{T}\V\V^T\x_{i}\right)\\
    =&\x_{i}^T\V\left(\Rt^{T}-\I\right)^T\V^T\U\U^T\V\left(\Rt^{T}+\I\right)\V^T\x_{i}\\
    =&\x_{i}^T\V\left(\Rt^{T}-\I\right)^T\V^T\U\U^T\V\left(\Rt^{T}-\I\right)\V^T\x_{i}+\x_{i}^T\V\left(\Rt^{T}-\I\right)^T\V^T\U\U^T\V(2\I)\V^T\x_{i}\\
    =&\underbrace{\x_{i}^T\V\left(\Rt^{T}-\I\right)^T\V^T\U}_{\hat{x}_i^T}\underbrace{ \U^T\V\left(\Rt^{T}-\I\right)\V^T\x_{i}}_{\hat{x}_i}+2\underbrace{\x_{i}^T\V\left(\Rt^{T}-\I\right)^T\V^T\U}_{\hat{x}_i^T}\U^T\x_{i}\\
\end{align*}

\begin{equation*}
  \begin{split}
    {\hat{\x}_i}=&
    \U^T\V\left(\Rt^{T}-\I\right)\V^T\x_{i}\\
    =&{\U^T\V\left(\Rt^{T}-\I\right)\left<\v_1,\x_{i}\right>\e_1}\\
    =&\norm{\x_i}{\U^T\V\left(\Rt^{T}-\I\right)\e_1}\\
    =&\norm{\x_i}{\U^T\V\left(\cos(\theta)\e_1-\sin(\theta)\e_2-\e_1\right)}\\
    =&\norm{\x_{i}}\left({\U^T\v_1\left(\cos(\theta)-1\right)}-{\U^T\v_2\sin(\theta)}\right)\\
    =&{\U^T\x_i\left(\cos(\theta)-1\right)}-\norm{\x_{i}}{\U^T\v_2\sin(\theta)}
  \end{split}
\quad\quad
  \begin{split}
\hat{\x}_i^T\U^T\x_{i}
=&\x_i^T\U\left({\U^T\x_i\left(\cos(\theta)-1\right)}-\norm{\x_i}{\U^T\v_2\sin(\theta)}\right)\\
=&\norm{\U^T\x_{i}}^2\left(\cos(\theta)-1\right)-\norm{\x_i}{\frac{\x_i^T\U\U^T\x_j}{\norm{\U\U^T\x_j}}\sin(\theta)}\\
=&\norm{\U^T\x_{i}}^2\left(\cos(\theta)-1\right)
  \end{split}
\end{equation*}

\begin{align*}
\hat{\x}_i^T\hat{\x}_i
=&\norm{\x_{i}}^2\left({\U^T\v_1\left(\cos(\theta)-1\right)}-{\U^T\v_2\sin(\theta)}\right)^T\left({\U^T\v_1\left(\cos(\theta)-1\right)}-{\U^T\v_2\sin(\theta)}\right)\\
=&\norm{\x_{i}}^2\left({\v_1^T\U\U^T\v_1\left(\cos(\theta)-1\right)^2}+{\v_2^T\U\U^T\v_2\sin(\theta)^2}-2\sin(\theta)\v_2^T\U{\U^T\v_1\left(\cos(\theta)-1\right)}\right)\\
=&\norm{\x_{i}}^2\left({\v_1^T\U\U^T\v_1\left(\cos(\theta)-1\right)^2}+{\v_2^T\v_2\sin(\theta)^2}\right)\\
=&\x_i^T\U\U^T\x_i\left(\cos(\theta)-1\right)^2+\norm{\x_{i}}^2{\sin(\theta)^2}\\
=&\norm{\U^T\x_i}^2\cos(\theta)^2-2\norm{\U^T\x_i}^2\cos(\theta)+\norm{\U^T\x_i}^2+(\norm{\x_{i}}^2-\norm{\U^T\x_{i}}^2){\sin(\theta)^2}+\norm{\U^T\x_{i}}^2{\sin(\theta)^2}\\
=&\norm{\U^T\x_i}^2-2\norm{\U^T\x_i}^2\cos(\theta)+\norm{\U^T\x_i}^2+(\norm{\x_{i}}^2-\norm{\U^T\x_{i}}^2){\sin(\theta)^2}\\
=&(\norm{\x_{i}}^2-\norm{\U^T\x_{i}}^2){\sin(\theta)^2}+2\norm{\U^T\x_i}^2(1-\cos(\theta))\\
\end{align*}

So finally:
\begin{align*}
    f_i(\U)-f_i(\U_{\theta})=&\hat{\x}_i^T\hat{\x}_i+2\hat{\x}_i^T\U^T\x_{i}\\
    =&2\norm{\U^T\x_{i}}^2\left(\cos(\theta)-1\right)+(\norm{\x_{i}}^2-\norm{\U^T\x_{i}}^2){\sin(\theta)^2}+2\norm{\U^T\x_i}^2(1-\cos(\theta))\\
    =&(\norm{\x_{i}}^2-\norm{\U^T\x_{i}}^2){\sin(\theta)^2}
\end{align*}
.
\end{proof}

\end{figure*}

\begin{figure*}
\begin{lemma}
Let $\U\in\orth{d}{r}$, $\Rt=G(\theta,i,j)$ (a Given rotation over the axes i,j), $\U^{'}=\Rt\U$.
\begin{align*}
    \e_{i}^T\U^{'}\U^{'T}\e_{i}-\e_{i}^T\U\U^{T}\e_{i}=\sin\left(2\theta\right)\e_{i}^{T}\U\U^{T}\e_{j}+\sin^{2}\left(\theta\right)\left(\e_{j}^T\U\U^{T}\e_{j}-\e_{i}^T\U\U^{T}\e_{i}\right)
\end{align*}
\end{lemma}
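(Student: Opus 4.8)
The plan is to reduce the claim to a direct bilinear expansion of the scalar $\e_i^T\U'\U'^T\e_i$, using only the way a Givens rotation acts on a single basis vector. First I would write $\U'\U'^T = \Rt\U\U^T\Rt^T$, so that
\[
\e_i^T\U'\U'^T\e_i = (\Rt^T\e_i)^T\,\U\U^T\,(\Rt^T\e_i).
\]
Since $\Rt=G(\theta,i,j)$ acts as the identity on every coordinate other than $i$ and $j$, the $i$-th column of $\Rt^T$ (equivalently, the $i$-th row of $\Rt$) is $\cos\theta\,\e_i+\sin\theta\,\e_j$, reading off the entries $\Rt_{ii}=\cos\theta$, $\Rt_{ij}=\sin\theta$ from the Givens definition recalled earlier in the paper. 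This is the only place the structure of $\Rt$ is used; in particular, semi-orthogonality of $\U$ plays no role and the identity is purely algebraic.

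Next I would substitute $\Rt^T\e_i=\cos\theta\,\e_i+\sin\theta\,\e_j$ and expand, using the symmetry of $\U\U^T$ to merge the two cross terms:
\[
\e_i^T\U'\U'^T\e_i = \cos^2\theta\,\e_i^T\U\U^T\e_i + 2\sin\theta\cos\theta\,\e_i^T\U\U^T\e_j + \sin^2\theta\,\e_j^T\U\U^T\e_j.
\]
Then I would apply $2\sin\theta\cos\theta=\sin 2\theta$, subtract $\e_i^T\U\U^T\e_i$ from both sides, and use $\cos^2\theta-1=-\sin^2\theta$ on the first coefficient; collecting terms yields exactly
\[
\sin(2\theta)\,\e_i^T\U\U^T\e_j + \sin^2\theta\bigl(\e_j^T\U\U^T\e_j - \e_i^T\U\U^T\e_i\bigr),
\]
as claimed.

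There is no genuine obstacle here: the statement is a one-line identity once $\Rt^T\e_i$ is written down correctly. The only delicate point is the sign convention in the Givens rotation — whether $\Rt^T\e_i$ picks up $+\sin\theta\,\e_j$ or $-\sin\theta\,\e_j$ — because the cross term enters through $\sin 2\theta$, which is odd in $\theta$, so an incorrect sign would flip the sign of the first term on the right-hand side. I would therefore verify it directly against the matrix entries $\Rt_{ii}=\Rt_{jj}=\cos\theta$, $\Rt_{ij}=\sin\theta$, $\Rt_{ji}=-\sin\theta$ stated in the paper before finalizing.
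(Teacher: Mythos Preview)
Your proposal is correct and follows essentially the same approach as the paper: both write $\e_i^T\U'\U'^T\e_i=(\Rt^T\e_i)^T\U\U^T(\Rt^T\e_i)$ with $\Rt^T\e_i=\cos\theta\,\e_i+\sin\theta\,\e_j$, expand the quadratic form, and apply $2\sin\theta\cos\theta=\sin 2\theta$ together with $\cos^2\theta-1=-\sin^2\theta$. The only cosmetic difference is that the paper carries the computation entrywise as $\sum_{l=1}^r[\U^T\Rt^T\e_i]_l^2$, while you work directly with the bilinear form; your observation that the hypothesis $\U\in\orth{d}{r}$ is never used is also correct.
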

\begin{proof}
Observe that:

\begin{align*}
    \e_{i}^T\U\U^{T}\e_{i}
    =&	\sum_{l=1}^{r}\U_{il}^{2}
\end{align*}
Since $\Rt\U\U^{T}\Rt^T=\U^{'}\U^{'T}$, we also have: 
\begin{align*}
    \e_{i}^T\U^{'}\U^{'T}\e_{i}
    =&\sum_{l=1}^{r}\left[\U^{T}\Rt^T\e_{i}\right]_{l}^{2}\\
    =&\sum_{l=1}^{r}\left[\U^{T}\left(\cos\left(\theta\right)\e_{i}+\sin\left(\theta\right)\e_{j}\right)\right]_{l}^{2}\\
    =&\sum_{l=1}^{r}\left(\cos\left(\theta\right)^2\U_{il}^{2}+2\cos\left(\theta\right)\U_{il}\sin\left(\theta\right)\U_{jl}+\sin\left(\theta\right)^{2}\U_{jl}^{2}\right)
\end{align*}

So:
\begin{align*}
    \e_{i}^T\U^{'}\U^{'T}\e_{i}-\e_{i}^T\U\U^{T}\e_{i}
    =&\sum_{l=1}^{r}\left(2\cos\left(\theta\right)\U_{il}\sin\left(\theta\right)\U_{jl}+\sin^{2}\left(\theta\right)\U_{jl}^{2}-\left(1-\cos^{2}\left(\theta\right)\right)\U_{il}^{2}\right)\\
    =& \sum_{l=1}^{r}\left(2\cos\left(\theta\right)\U_{il}\sin\left(\theta\right)\U_{jl}+\sin^{2}\left(\theta\right)\U_{jl}^{2}-\sin^{2}\left(\theta\right)\U_{il}^{2}\right)\\
    =& \sum_{l=1}^{r}\left(2\cos\left(\theta\right)\U_{il}\sin\left(\theta\right)\U_{jl}+\sin^{2}\left(\theta\right)\left(\U_{jl}^{2}-\U_{il}^{2}\right)\right)\\
    =& 2\cos\left(\theta\right)\sin\left(\theta\right)\sum_{l=1}^{r}\U_{il}\U_{jl}+\sin^{2}\left(\theta\right)\sum_{l=1}^{r}\left(\U_{jl}^{2}-\U_{il}^{2}\right)\\
    =& \sin\left(2\theta\right)\e_{i}^{T}\U\U^{T}\e_{j}+\sin^{2}\left(\theta\right)\left(\e_{j}^T\U\U^{T}\e_{j}-\e_{i}^T\U\U^{T}\e_{i}\right)
\end{align*}
\end{proof}

\end{figure*}

\end{document}